\def\eqref#1{equation~\ref{#1}}
\def\1{\bm{1}}
\def\vr{{\bm{r}}}
\def\vs{{\bm{s}}}
\def\vx{{\bm{x}}}
\def\vz{{\bm{z}}}
\def\mA{{\bm{A}}}
\def\mB{{\bm{B}}}
\def\mI{{\bm{I}}}
\def\mX{{\bm{X}}}
\DeclareMathAlphabet{\mathsfit}{\encodingdefault}{\sfdefault}{m}{sl}
\SetMathAlphabet{\mathsfit}{bold}{\encodingdefault}{\sfdefault}{bx}{n}
\theoremstyle{plain}
\newtheorem{theorem}{Theorem}
\newtheorem{proposition}{Proposition}
\theoremstyle{definition}
\newtheorem{definition}[theorem]{Definition}
\theoremstyle{remark}
\title{Multi-scale Generative Modeling for Fast Sampling}
\author{
  Xiongye Xiao$^1$\quad Shixuan Li$^1$\quad Luzhe Huang$^2$\quad Gengshuo Liu$^1$\quad 
  Trung-Kien Nguyen$^1$\quad \\\textbf{Yi Huang$^3$\quad Di Chang$^1$\quad
  Mykel J. Kochenderfer$^4$\quad Paul Bogdan$^1$} \\
  \\
  $^1$University of Southern California \quad
  $^2$University of California, Los Angeles \quad \\
  $^3$University of Chinese Academy of Sciences, Shenzhen \quad 
  $^4$Stanford University
}
\begin{document}

\maketitle

\begin{abstract}
While working within the spatial domain can pose problems associated with ill-conditioned scores caused by power-law decay, recent advances in diffusion-based generative models have shown that transitioning to the wavelet domain offers a promising alternative. However, within the wavelet domain, we encounter unique challenges, especially the sparse representation of high-frequency coefficients, which deviates significantly from the Gaussian assumptions in the diffusion process. To this end, we propose a multi-scale generative modeling in the wavelet domain that employs distinct strategies for handling low and high-frequency bands. In the wavelet domain, we apply score-based generative modeling with well-conditioned scores for low-frequency bands, while utilizing a multi-scale generative adversarial learning for high-frequency bands. As supported by the theoretical analysis and experimental results, our model significantly improve performance and reduce the number of trainable parameters, sampling steps, and time. The source code is available at https://anonymous.4open.science/r/WMGM-3C47.
\end{abstract}
\section{Introduction}
Generative models (GMs) have revolutionized the machine learning field by their ability to create novel, high-quality data that closely mimic real-world distributions. Many GM frameworks have been proposed, such as variational autoencoders (VAE)~\cite{bansal2022cold}, recurrent neural networks (RNNs) ~\cite{hopfield1982neural}, generative adversarial networks (GANs)~\cite{goodfellow2014generative}, and hybrid approaches (i.e., combining strategies). Such models have been applied to generate high-quality audio waveforms or speech~~\cite{oord2016wavenet}, constructing natural-looking images~\cite{goodfellow2014generative, brock2018large,karras2017progressive}, generating coherent text~~\cite{bowman2015generating}, and designing molecules~~\cite{gomez2018automatic}. Score-based generative models (SGMs)~~\cite{scoresde, ddpm, song2019generative}, also known as denoising diffusion models, encode the probability distributions through a scoring approach (e.g., a vector field that points in the direction of increasing likelihood of data) and recover the actual data distribution through a learnable reverse process to the forward Gaussian diffusion process. Although these SGMs have been highly successful, many physical, chemical, biological, and engineering systems have complex multiscale and non-Gaussian properties, leading to ill-conditioned scores~~\cite{wave-score}. 

To make the discussion more concrete, let us consider the noise-adding process in the frequency domain (i.e., wavelet domain), where noise contains a uniform power spectrum in each frequency band. However, in the wavelet domain, the high-frequency coefficients of natural images are sparse and contain minimal energy, while the low-frequency coefficients encapsulate most of the energy, a distribution characteristic that mirrors the power law decay of the power spectrum of natural images. Given the disparity between image and noise power spectra, low-frequency components, which hold the majority of the energy, receive the same magnitude of noise during the noise addition process, and high-frequency coefficients, despite being sparse, obtain a relatively larger amount of noise. This dynamics, also depicted in Fig.~\ref{fig:diffusion_cat}, offers inspiration for analyzing diffusion in the frequency domain, incorporating corresponding generative strategies tailored for each frequency sub-domain (namely, the high and low-frequency domains). Some studies~\cite{guth2022wavelet, de2021diffusion} highlight the challenges associated with the ill-conditioning properties of the score during diffusion in the spatial domain. They establish a connection between the minimum discretization steps and the power spectrum of the image, thereby rationalizing the application of noise addition in the wavelet domain.

\begin{figure*}[ht]
\centering
\includegraphics[width=\linewidth]{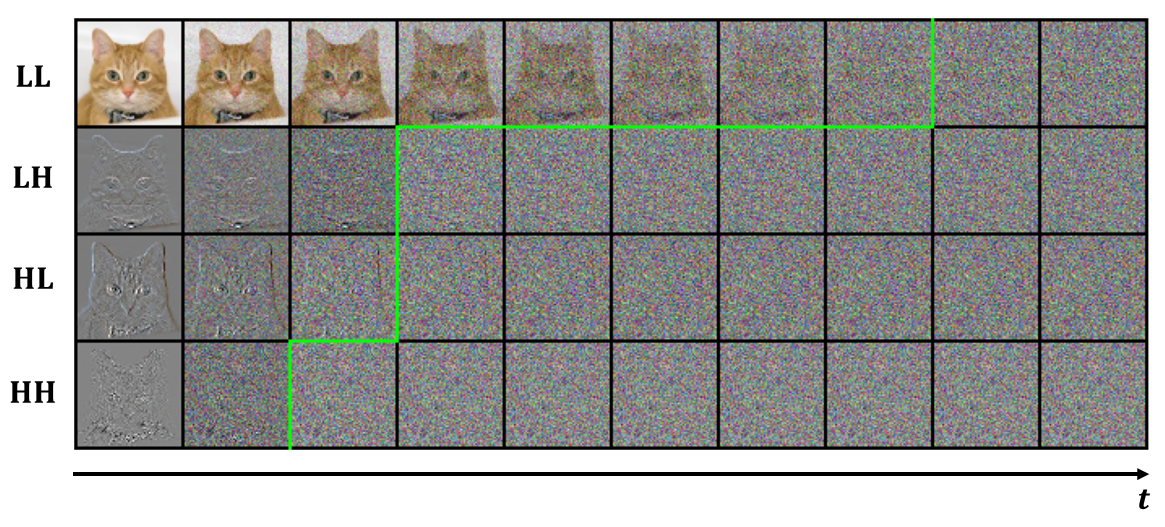}
\caption{Diffusion trajectories of the wavelet coefficients. Notice that the high-frequency components (LH,HL,HH) are overwhelmed by noise at an earlier stage marked by the green line. At the same time, the low-frequency component (LL) degrades more slowly.}
\label{fig:diffusion_cat}
\end{figure*}

While the advantages of diffusion within the wavelet domain are evident from both experimental results and intuition, there is a lack of theoretical substantiation for the correctness of general diffusion in the wavelet domain and comprehensive analysis of the properties of each wavelet band. Specifically, considering the sparsity and the non-Gaussian high-dimensional probability distribution functions (HDPDFs) \citet{wavedm, diffLI} of high-frequency coefficients, discussion about adapting SGM to high-frequency bands is insufficient. To bridge this gap, our research first highlights that deep-scale low-frequency coefficient scores are well-conditioned, alleviating the ill-conditioning issue by general diffusion models in the spatial domain. Subsequently, we illustrate that there is a duality between SGM in the wavelet domain and the spatial domain. Furthermore, considering high-frequency coefficients' sparsity and non-Gaussian nature, we introduce the Multi-Scale Adversarial Learning (MSAL). The framework facilitates high-quality generation, data efficiency, and fast sampling. 

\noindent \textbf{Our contributions}: (\textit{i}) We theoretically establish the generative modeling in the wavelet domain and analyze the distribution of multi-scale wavelet coefficients; (\textit{ii}) We introduce the multi-scale adversarial learning (MSAL) in the wavelet domain to tackle the highly non-Gaussian distribution of high-frequency wavelet coefficients efficiently; (\textit{iii}) We demonstrate that the proposed WMGM framework significantly accelerates sampling speed, while also maintaining high-quality image generation.
\section{Related Work}
Diffusion models have emerged as state-of-the-art generative models, which are stable and capable to generate high fidelity images. They use Markov chains to gradually introduce noise into the data and learn to capture the reverse process for sampling. To optimize the reverse process, Denoising Diffusion Probabilistic Models (DDPMs) \cite{ddpm, dhariwal2021diffusion} predict the added noises from the diffused output at arbitrary time steps. Another common approach is score-based generative models (SGMs) \cite{song2019generative, scoresde}, which aims to predict the score $\nabla\log p_{x_t}$. However, a significant drawback of diffusion models is their lengthy sampling time.   In response, recent research has explored methods to expedite this process, leading to more efficient sampling strategies \cite{ddgan, diffusiongan, salimans2022progressive, song2023consistency, meng2023distillation}.

Emerging works attempt to incorporate the wavelet domain into diffusion models, either to facilitate the training or to speed the sampling process. WSGM \citet{wave-score} proposes a multi-scale score-based diffusion model and theoretically analyzes the advantages of wavelet-domain representation. WaveDiff \citet{wavediff} applies a GAN to accelerate the prediction of wavelet coefficients during the reverse process in the wavelet domain. WaveDM \citet{wavedm} uses low-frequency information from the first three bands for the diffusion process, and proposes an efficient sampling strategy conditioned on degraded images and predicted high spectrum for image restoration. The effectiveness of diffusion techniques in the wavelet domain has also been successfully demonstrated for a wide variety of tasks, including low-light image enhancement \cite{diffLI}, image super-resolution \cite{DiWa}, and 3D shape generation \cite{3dshape}.

\section{Methodology}

 
\subsection{Preliminaries: Ill-conditioned Score in Spatial Domain}
\label{sec:ill}

For the Score-based Generative Model (SGM) \cite{scoresde, ddpm, song2019generative}, the forward/noising process can be mathematically formulated as the Ornstein-Uhlenbeck (OU) process. The general time-rescaled OU process can be written as follows:
\begin{equation}
\label{eq:OU}
d\mX_t=-g(t)^2\mX_t dt+\sqrt{2}g(t) d \mB_t.
\end{equation}
Here, ${(\mX_t)}_{t \in [0, T]}$ is the noising process with the initial condition $\mX_0$ sampled from the data distribution and ${(\mB_t)}_{t \in [0, T]}$ is a standard $d$-dimensional Brownian motion. We use $\mX^\leftarrow_t$ to denote the reverse process, such that ${(\mX^\leftarrow_t)}_{t \in [0, T]} = {(\mX_{T-t})}_{t \in [0, T]}$. With the common assumption that $g(t)=1$ in standard diffusion models, the reverse processes are as follows: 
\begin{equation}
\label{eq:reverse}
d\mX^\leftarrow_t=\left( \mX^\leftarrow_t+2\nabla \log p_{T-t}(\mX^\leftarrow_t) \right)dt+\sqrt{2}d\mB_t,\\
\end{equation}

where $p_{t}$ denotes the marginal density of $\mX_{t}$, and $\nabla \log p_{t}$ is called the score. To generate $\mX_0$ from $\mX_T$ via the time-reversed SDE, it is essential to accurately estimate the score \( \nabla \log p_t \) at each time \( t \) and to discretize the SDE with minimal error. 


An approximation of the reverse process, as given in Eq. \ref{eq:reverse}, can be computed by discretizing time and approximating \( \nabla \log p_{t} \) by a score matching loss \( s_t \). This results in a Markov chain approximation of the time-reversed Stochastic Differential Equation (SDE). Accordingly, the Markov chain is initialized by  $\tilde{\mathbf{x}}_T \sim \mathcal{N}(0, I_d)$  and evolves over uniform time steps  $\Delta t$  that decrease from $t_N = T$ to $t_0 = 0$. The discretized process is as follows:
\begin{equation}
\label{eq:uniform_discretization}
\tilde{\boldsymbol{x}}_{t-1} = \tilde{\boldsymbol{x}}_{t} + \Delta t \left( \tilde{\boldsymbol{x}}_{t} + 2 s_{t} (\tilde{\boldsymbol{x}}_{t}) \right) + \sqrt{2 \Delta t} \mathbf{z}_t,
\end{equation}
where $\mathbf{z}_t$ is a realization of Brownian motion $\mB_t$.  

To explore the impact of the regularity of the score \( \nabla \log p_{t} \) on the discretization process in Eq. \ref{eq:uniform_discretization}, let us assume that the data distribution is Gaussian, \( p = \mathcal{N}(0, \Sigma) \), with a covariance matrix \( \Sigma \) in a \( d \)-dimensional space. Consider $\tilde{p}_{t}$ as the distribution corresponding to $\tilde{\mathbf{x}}_{t}$, the approximation error between the data distribution $p$ and $\tilde{p}_{0}$ stems from (i) $\Psi_T$: the misalignment between the distributions of $\tilde{\mathbf{x}}_{T}$ and ${\mathbf{x}}_{t}$; (ii) $\Psi_{\Delta t}$: time discretization error. The sum of two errors $\varepsilon$ is connected to \( \Sigma \), specifically in the case of uniform time sampling with intervals \( \Delta t \). We standardize the signal energy by enforcing \( \text{Tr}(\Sigma) = d \) and define \( \kappa \) as the condition number of \( \Sigma \).

\begin{theorem} 

Suppose the Gaussian distribution $p = \mathcal{N}(0, \Sigma)$ and distribution ${\tilde{p}}_0$ from time reversed SDE, the Kullback-Leibler divergence between $p$ and $p_{\tilde{0}}$ relates to the covariance matrix $\Sigma$ as:
\begin{equation}
KL(p \parallel {\tilde{p}}_0) \leq \Psi_T + \Psi_{\Delta t} + \Psi_{T,\Delta t},
\end{equation} 
with:
\begin{align}
    \Psi_T &= f\left(e^{-4T} \left| \mathrm{Tr}\left((\Sigma - \mathrm{Id})\Sigma\right)\right|\right), \\
    \Psi_{\Delta t} &= f\left(\Delta t \left| \mathrm{Tr}\left(\Sigma^{-1} - \Sigma(\Sigma - \mathrm{Id})^{-1}\frac{\log(\Sigma)}{2} \right.\right.\right.\left.\left.\left. + \frac{(\mathrm{Id} - \Sigma^{-1})}{3}\right)\right|\right), \\
    \Psi_{T,{\Delta t}} &= o\left({\Delta t} + e^{-4T}\right), \quad \Delta t \rightarrow 0, T \rightarrow +\infty
\end{align}
where $f(t) = t - \log(1 + t)$ and $d$ is the dimension of \(\Sigma\), $\mathrm{Tr}\left(\Sigma\right)=d$. 
\label{theorem1}
\end{theorem}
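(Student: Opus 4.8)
The plan is to exploit the fact that every distribution in the pipeline is a centered Gaussian, so the whole argument reduces to tracking covariance matrices. Since $p=\mathcal N(0,\Sigma)$ is Gaussian, the forward OU process in Eq.~\ref{eq:OU} keeps $\mX_t$ Gaussian with covariance $\Sigma_t = e^{-2t}\Sigma + (1-e^{-2t})\mathrm{Id}$; the exact score is linear, $\nabla\log p_t(\vx) = -\Sigma_t^{-1}\vx$; and the Euler--Maruyama update in Eq.~\ref{eq:uniform_discretization} is an affine Gaussian map. All of these operators commute with $\Sigma$, so I would first diagonalize $\Sigma=\mathrm{diag}(\lambda_1,\dots,\lambda_d)$ and observe that the entire forward/reverse dynamics decouples into $d$ independent scalar problems, with the output $\tilde p_0=\mathcal N(0,\tilde\Sigma_0)$ having $\tilde\Sigma_0$ diagonal in the same basis. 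The Gaussian KL formula then collapses to $KL(p\parallel\tilde p_0)=\tfrac12\sum_i f\!\left(\tfrac{\lambda_i}{\tilde\lambda_i}-1\right)$ with $f(t)=t-\log(1+t)$, so the problem becomes: express each output variance $\tilde\lambda_i$ in terms of $\lambda_i$, $T$, and $\Delta t$, then resum.

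For a single mode with variance $\lambda$ I would compute $\sigma_t^2=e^{-2t}\lambda+1-e^{-2t}$ and write the scalar Euler recursion $\tilde\sigma^2_{t-\Delta t}=a_t^2\,\tilde\sigma^2_t+2\Delta t$ with $a_t=1+\Delta t-2\Delta t/\sigma_t^2$, run from the (incorrect) initialization $\tilde\sigma_T^2=1$ down to $t=0$. The analysis then splits $\tilde\lambda=\tilde\sigma_0^2$ into the true value $\lambda$ plus two perturbations. The initialization mismatch $\tilde\sigma_T^2-\sigma_T^2=e^{-2T}(1-\lambda)$ is transported to $t=0$ by the linearized reverse flow, whose multiplier is $\exp\!\big(\int_0^T 2(1-2/\sigma_t^2)\,dt\big)$; using $\int_0^T\sigma_t^{-2}\,dt=T-\tfrac12\log\lambda+o(1)$ this multiplier is $\lambda^2 e^{-2T}$, giving a leading variance error of order $\lambda^2(1-\lambda)e^{-4T}$ and hence a per-mode KL argument $\approx\lambda(\lambda-1)e^{-4T}$. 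Summing over $i$ turns $\sum_i\lambda_i(\lambda_i-1)$ into $\mathrm{Tr}((\Sigma-\mathrm{Id})\Sigma)$, exactly the argument of $f$ in $\Psi_T$. The discretization error is extracted separately by comparing one Euler step to the exact exponential flow: $a_t^2$ agrees with the flow to first order but leaves an $O(\Delta t^2)$ local defect, and accumulating it over the $N=T/\Delta t$ steps gives an $O(\Delta t)$ global error whose coefficient is an integral of $\sigma_t^{-2}$ and $\log\sigma_t^2$ against $t$; evaluating these integrals in the large-$T$ limit and summing over modes produces precisely the $\Sigma^{-1}$, $\Sigma(\Sigma-\mathrm{Id})^{-1}\log(\Sigma)/2$, and $(\mathrm{Id}-\Sigma^{-1})/3$ contributions in $\Psi_{\Delta t}$.

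Finally I would assemble the bound. Using convexity of $f$ together with $f(0)=0$ (hence superadditivity $f(a)+f(b)\le f(a+b)$ on nonnegative arguments) and the normalization $\mathrm{Tr}(\Sigma)=d$, the per-mode KL terms coming from the initialization mismatch resum into $\Psi_T=f(e^{-4T}|\mathrm{Tr}((\Sigma-\mathrm{Id})\Sigma)|)$ and those from discretization into $\Psi_{\Delta t}$, while all products of the two perturbations and all higher-order Taylor remainders of $f$ are collected into $\Psi_{T,\Delta t}=o(\Delta t+e^{-4T})$.

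The hard part will be the second paragraph: controlling the discretized reverse recursion well enough to produce the exact first-order coefficient in $\Delta t$ rather than a crude bound. This requires a uniform estimate on the accumulation of the $O(\Delta t^2)$ local defects over $N=T/\Delta t$ steps, and it is delicate for small eigenvalues $\lambda_i$, where $\sigma_t^2\to\lambda_i$ makes the score $2/\sigma_t^2$ large and the recursion stiff; I would need to verify that these terms stay integrable so that the remainder is genuinely $o(\Delta t)$. A secondary subtlety is the sign bookkeeping when resumming $\sum_i f(t_i)$ into a single $f$ of a trace, since individual per-mode arguments $\lambda_i(\lambda_i-1)e^{-4T}$ need not be nonnegative; this is where the superadditivity argument must be applied carefully, with the discrepancy absorbed into the $o(\cdot)$ term.
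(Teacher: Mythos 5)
Your proposal is correct and takes essentially the same route as the paper: the paper's own proof (Appendix~\ref{prooft1}, which states Theorem~\ref{theorem_sp} and defers the details to Proof S5 of the WSGM paper) likewise reduces everything to tracking the Gaussian covariance of the discretized reverse process, expands it as $\Sigma + e^{-4T}\Sigma^{\widehat{T}} + \Delta t\,\Psi^{\widehat{T}} + (\Delta t)^2 R^{\widehat{T},\Delta t}$, and feeds this into the exact Gaussian KL formula with $f(t)=t-\log(1+t)$. Your per-eigenmode computation (initialization mismatch $e^{-2T}(1-\lambda)$ transported by the multiplier $\lambda^2 e^{-2T}$, accumulated Euler defects of order $\Delta t$, and resummation into the traces, with sign cancellations and all quadratic remainders absorbed into $\Psi_{T,\Delta t}=o(\Delta t + e^{-4T})$) is exactly that argument carried out in diagonalized form.
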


\begin{proposition}
For any $\epsilon > 0$, there exists $T, {\Delta t} \geq 0$ such that:
\begin{equation}
\frac{1}{d}(\Psi_T + \Psi_{\Delta t}) \leq \epsilon, \quad \frac{T}{\Delta t} \leq \epsilon^{-2}\kappa^3
\end{equation}
where $C \geq 0$ is a universal constant, and $\kappa$ is the condition number of $\Sigma$.
\label{pro1}
\end{proposition}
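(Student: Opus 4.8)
The plan is to read off an admissible pair $(T,\Delta t)$ directly from Theorem~\ref{theorem1}. Note first that the displayed requirement constrains only $\Psi_T+\Psi_{\Delta t}$; the higher-order remainder $\Psi_{T,\Delta t}=o(\Delta t+e^{-4T})$ does not enter this inequality, so no asymptotic bookkeeping for it is needed here. I would therefore split the budget, demanding $\tfrac1d\Psi_T\le\epsilon/2$ and $\tfrac1d\Psi_{\Delta t}\le\epsilon/2$ separately, solve each for the smallest admissible $T$ and largest admissible $\Delta t$, and then verify that the resulting ratio $T/\Delta t$ sits below $\epsilon^{-2}\kappa^3$ up to the universal constant $C$.

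The first ingredient is to linearise $f$. Since $\log(1+t)\ge 0$ for $t\ge0$, one has $f(t)=t-\log(1+t)\le t$, so that, writing $A:=\mathrm{Tr}\!\left((\Sigma-\mathrm{Id})\Sigma\right)$ and $B$ for the trace appearing inside $\Psi_{\Delta t}$,
\[
\Psi_T \le e^{-4T}\,|A|, \qquad \Psi_{\Delta t}\le \Delta t\,|B|.
\]
The second ingredient is to estimate $|A|$ and $|B|$ through $\kappa$ using the normalisation $\mathrm{Tr}(\Sigma)=d$. Since the minimal eigenvalue is at most the average and the maximal at least the average, $\lambda_{\min}\le1\le\lambda_{\max}$, and together with $\kappa=\lambda_{\max}/\lambda_{\min}$ this yields $\lambda_{\max}\le\kappa$ and $\lambda_{\min}\ge1/\kappa$. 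Hence $A=\mathrm{Tr}(\Sigma^2)-d\le\lambda_{\max}\,\mathrm{Tr}(\Sigma)-d\le(\kappa-1)d$, and the eigenvalue-wise form of $B$ is bounded by $O(\kappa)$ per coordinate (its dominant contribution being the $\Sigma^{-1}$ term, of size at most $1/\lambda_{\min}\le\kappa$), giving $|B|\le C_1\kappa d$ for a universal $C_1$.

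Putting these together, I would take $T=\tfrac14\log\!\big(2(\kappa-1)/\epsilon\big)$, which forces $\tfrac1d\Psi_T\le\tfrac1d e^{-4T}(\kappa-1)d\le\epsilon/2$, and $\Delta t=\epsilon/(2C_1\kappa)$, which forces $\tfrac1d\Psi_{\Delta t}\le\tfrac1d\Delta t\,C_1\kappa d\le\epsilon/2$. Then
\[
\frac{T}{\Delta t}=\frac{C_1\kappa}{2\epsilon}\,\log\!\big(2(\kappa-1)/\epsilon\big)\le\frac{C_1\kappa}{2\epsilon}\cdot\frac{2\kappa}{\epsilon}=\frac{C_1\kappa^2}{\epsilon^2}\le C\,\epsilon^{-2}\kappa^3,
\]
where the middle step uses the crude bound $\log x\le x$ and the last uses $\kappa\ge1$, establishing the claim with $C=C_1$.

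The main obstacle is the uniform control of $B=\mathrm{Tr}\!\big(\Sigma^{-1}-\Sigma(\Sigma-\mathrm{Id})^{-1}\tfrac{\log\Sigma}{2}+\tfrac{\mathrm{Id}-\Sigma^{-1}}{3}\big)$: the middle term has a removable singularity at $\lambda=1$ and must be shown to remain bounded (in fact $O(\log\kappa)$) across the spectrum $[1/\kappa,\kappa]$, so that the $O(\kappa)$ contribution of $\Sigma^{-1}$ dominates and no per-eigenvalue blow-up inflates the power of $\kappa$. If one prefers to bound each band's error separately, the superadditivity of the convex function $f$, namely $\sum_i f(x_i)\le f\big(\sum_i x_i\big)$ for $x_i\ge0$, justifies collapsing the per-eigenvalue errors into the single trace of Theorem~\ref{theorem1}. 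Confirming that $\kappa^3$ is a safe upper bound (the estimate above in fact yields the stronger $\kappa^2$, leaving comfortable slack) and pinning down the universal constant is the remaining accounting.
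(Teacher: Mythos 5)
Your argument is correct, but it is worth flagging that the paper itself never writes out a proof of this proposition: Appendix~A states the auxiliary Theorem~\ref{theorem_sp} (the exact Gaussian law of the discretized reverse process, expanded in powers of $e^{-4T}$ and $\Delta t$) and then defers entirely to ``Proof S5'' of the cited WSGM paper. So the paper's route is: exact computation of the discretized process law, expansion of the Gaussian KL to obtain Theorem~\ref{theorem1}, and then a choice of $T,\Delta t$ buried in the cited supplement. You instead take Theorem~\ref{theorem1} as a black box and do the remaining work explicitly: linearize via $f(t)=t-\log(1+t)\le t$, convert the normalization $\mathrm{Tr}(\Sigma)=d$ into the spectral bounds $\lambda_{\min}\ge 1/\kappa$, $\lambda_{\max}\le\kappa$, bound the two traces by $(\kappa-1)d$ and $C_1\kappa d$ (your treatment of the removable singularity of $\Sigma(\Sigma-\mathrm{Id})^{-1}\log(\Sigma)$ at eigenvalue $1$ is the right and necessary point), and solve for $T$ and $\Delta t$. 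This is a legitimate, self-contained, elementary derivation of the proposition as a corollary of Theorem~\ref{theorem1}, and it in fact delivers the sharper count $T/\Delta t\le C_1\kappa\epsilon^{-1}\log(2\kappa/\epsilon)\le C_1\kappa^{2}\epsilon^{-2}$, comfortably inside the stated $\epsilon^{-2}\kappa^{3}$; note your bound carries a universal constant, which matches the intended statement --- the dangling ``where $C\ge 0$ is a universal constant'' clause indicates the display was meant to read $C\epsilon^{-2}\kappa^{3}$. What the paper's heavier machinery buys is Theorem~\ref{theorem1} itself (and, in WSGM, matching lower bounds on the step count); for the proposition alone, your bookkeeping is exactly the kind of argument the cited supplement performs.

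Three small repairs would make your write-up airtight: (i) you bound $A=\mathrm{Tr}(\Sigma^2)-d$ only from above, but you use $|A|$; add the one-line observation $A\ge (\mathrm{Tr}\,\Sigma)^2/d-d=0$ (Cauchy--Schwarz), so $|A|\le(\kappa-1)d$ indeed holds. (ii) Your choice $T=\tfrac14\log\bigl(2(\kappa-1)/\epsilon\bigr)$ degenerates when $\kappa=1$ or $2(\kappa-1)\le\epsilon$; take $T=\max\bigl\{0,\tfrac14\log\bigl(2(\kappa-1)/\epsilon\bigr)\bigr\}$, in which case $\Psi_T\le(\kappa-1)d\le\epsilon d/2$ holds anyway. (iii) Strictly, $N=T/\Delta t$ must be a positive integer (Theorem~\ref{theorem_sp} assumes $T=N\Delta t$); rounding $T$ up to $\lceil T/\Delta t\rceil\,\Delta t$ only decreases $\Psi_T$ and perturbs the ratio by at most $1$, absorbed into the constant. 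None of these affects the validity of your approach.
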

Please refer to Appendix A\ref{prooft1} for the proof of Theorem \ref{theorem1} and Proposition \ref{pro1}. Here, the minimum number of time steps $N=\frac{T}{\Delta t}$ is limited by the Lipschitz regularity of the score $ \nabla \log p_{t} $, and the relationship can be expressed as:
\begin{equation}
    \label{eq: upper_bound}
    N = \frac{T}{\Delta t}  \leq \epsilon^{-2}\kappa^3.
\end{equation}

Equation \ref{eq: upper_bound} defines the upper bound on the number of time steps $N$. It indicates that as the condition number of the covariance increases, the number of time steps also increases correspondingly. Within the research conducted in \cite{guth2022wavelet}, Theorem \ref{theorem1} is expanded to accommodate non-Gaussian processes. This expansion establishes a link between the number of discretization steps $N$ and the regularity of the score $\nabla \log p_{t}$, as detailed in Theorem \ref{theorem2} (see Appendix A), suggesting that the non-Gaussian processes characterized by an ill-conditioned covariance matrix necessitate a significant number of discretization steps to achieve a small error. 

Natural images are generally modeled by spatially stationary distributions, implying the applicability of the Wiener-Khinchin theorem to its covariance matrix \cite{champeney1987handbook}. Specifically, the covariance matrix $\Sigma$ of a natural image can be diagonalized by
$\Sigma = \mathbb{E}(\mX \mX^T) = U^T S U$, where $U$ is the matrix representing the Fourier transform and $S = \text{diag}(S(\omega))$ is the diagonal matrix of the power spectrum of frequency $\omega$. Therefore, the condition number of such distributions is equal to $\frac{\max_{\omega}(S(\omega))}{\min_{\omega}(S(\omega))}$. However, according to the well-known power-decay law of the power spectrum of natural images \cite{simoncelli2001natural, ruderman1994statistics}, we have
\begin{equation}
\label{eq: power_decay}
S(\omega) \sim \left(\lambda^\eta + |\omega|^\eta\right)^{-1}.
\end{equation}
Natural images generally conform to this power-law decay with $\eta = 1$ and $\frac{2\pi}{\lambda}$ approximating the image width ($L$). Consequently, the reverse process in the spatial domain for natural images is highly ill-conditioned and requires excessive discretization steps to have a small error. In order to mitigate the inherent ill-conditioning properties of the score observed in the spatial domain, employing a wavelet transform represents a pragmatic solution to this dilemma.

\subsection{Diffusion in Wavelet Domain}
Here we demonstrate the duality between the spatial and wavelet domains in terms of diffusion processes. Subsequently, we delve into the well-conditioning properties inherent to the low-frequency coefficients and the sparsity of the high-frequency coefficients, serving as a pivotal inspiration for our proposed model. Technical details on the wavelet transform and multiresolution analysis (MRA) have been provided in Appendix B.


 
\begin{figure*}[ht]
\centering
\includegraphics[width=0.8\linewidth]{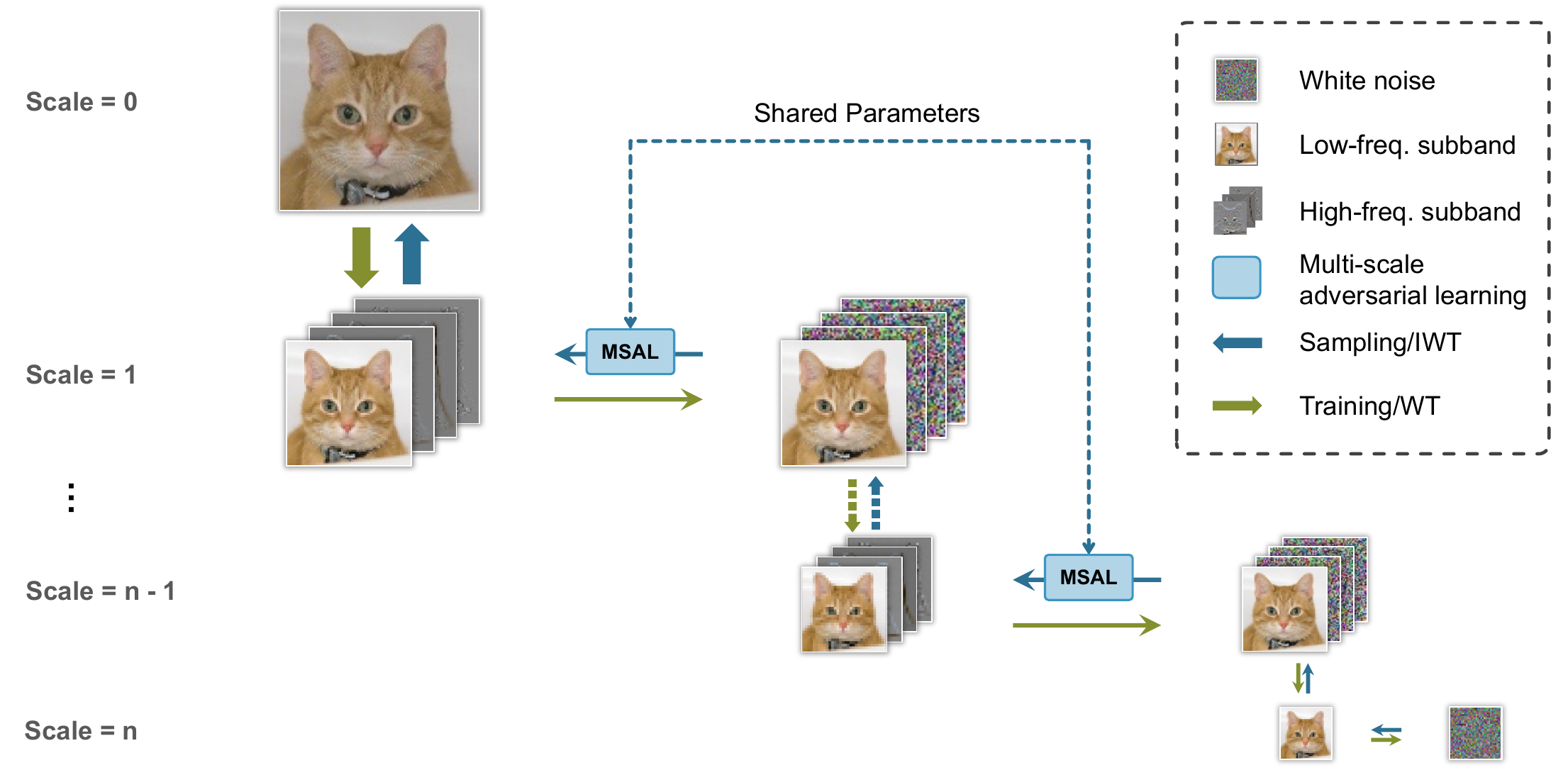}
\caption{The framework of WMGM, featuring a hierachical wavelet transform (WT) and inverse wavelet transform (IWT). Reverse diffusion in wavelet domain is utilized for generating low-frequency coefficients at each scale. Subsequently, the MSAL is applied to learn and generate the high-frequency coefficients from these low-frequency components.}
\label{fig: framework}
\end{figure*}

\subsubsection{Duality between Spatial Domain and Wavelet Domain}

The discrete wavelet transform (DWT), such as Haar DWT, and we can smoothly write the DWT as:
\begin{equation}
\hat{\mX} = \mA \mX,\quad \mX\in \mathbb{R}^d.
\end{equation}
In this expression, \( \hat{\mX} \) represents the wavelet coefficients of the signal \( \mX \) in \( \mathbb{R}^d \), and \( \mA \) denotes the discrete wavelet matrix. Importantly, \( \mA \) is an orthogonal matrix, satisfying the condition \( \mA {\mA}^\top = \mI \), where \( \mI \) is the identity matrix. Following this transformation, we introduce the processes for score-based generative modeling in the wavelet domain.

When we explore the forward (noising) process within score-based generative models, particularly after applying the Discrete Wavelet Transform (DWT) to the initial data $\mX_0$, we observe that the representation in the wavelet domain, denoted by $\hat{\mX}_t$, similarly follows an Ornstein-Uhlenbeck (OU) process akin to that in the spatial domain as shown in Eq. \ref{eq:OU}. This observation underscores the consistency of the noising process across different representations of data. Based on this, we can succinctly define the forward and reverse process in the wavelet domain as follows:

\begin{definition} [Forward Process in the Wavelet Domain]
The forward process in the wavelet domain refers to the noising process that a transformed initial dataset $\hat{\mX}_0 = \mA\mX_0$, obtained through the Discrete Wavelet Transform, undergoes over time $t$. This process is governed by the stochastic differential equation:
\begin{equation}
d\hat{\mX}_t = -g(t)^2\hat{\mX}_t dt + \sqrt{2}g(t) d\hat{\mB}_t,
\end{equation}
where $g(t)$ modulates the level of noise, and $\hat{\mB}_t = \mA \mB_t$ represents the standard Brownian motion in the wavelet domain.
\end{definition}

\begin{definition} [Reverse Process in the Wavelet Domain]
The reverse process in the wavelet domain, denoted by $\hat{\mX}^\leftarrow_t$, is described by the stochastic differential equation (SDE):
\begin{equation}
d\hat{\mX}^\leftarrow_t = \left( \hat{\mX}^\leftarrow_t + 2\nabla \log q_{T-t}(\hat{\mX}^\leftarrow_t) \right)dt + \sqrt{2}d\hat{\mB}_t,
\end{equation}
where $q_t$ represents the density distribution of $\hat{\mX}t$ in the wavelet domain, with $q_t(\vx) = p_t(\mA^T \vx)$ linking the wavelet domain densities to the spatial domain through the transformation matrix $\mA$. The gradient term $\nabla \log q_{T-t}(\hat{\mX}^\leftarrow_t)$ guides the reverse process by adjusting the trajectory based on the model's learned density estimation in the wavelet domain, facilitating the denoising and reconstruction of the original data from its noised state.
\end{definition}

The derivation of the forward/reverse process in the wavelet domain is provided in Appendix D. In the training processes, the optimal score-based model $\vr_{\hat{\theta}}$ approximates the score \(\nabla \log q_{t}\) in the wavelet domain:
\begin{equation}
\label{loss}
\begin{split}
{\hat{\theta}}^{*} = \arg \min_{{\hat{\theta}}} &\mathbb{E}_t \Biggl\{ \hat\lambda(t) \times \mathbb{E}_{\hat\mX_0}\mathbb{E}_{\hat\mX_t|\hat\mX_0}\Biggl[ \Biggl\| \vr_{\hat\theta}(\hat\mX_t,t) 
- \nabla_{\hat\mX_t}\log q_{0t}(\hat\mX_t|\hat\mX_0) \Biggr\|^2 \Biggr] \Biggr\}.
\end{split}
\end{equation}
So far we have shown a duality between the wavelet and spatial domains in score-based generative modeling. In the following sections, we discuss the characteristics of high-frequency and low-frequency coefficients within the wavelet domain separately. For clarity in exposition, we use \(x_L\) and \(x_H\) to represent the low-frequency and high-frequency coefficients in the wavelet domain, respectively, instead of using \(\hat{X}\).

\subsubsection{Well-conditioning Property of Low-frequency Coefficients}

\label{low-fre}

A signal $\vx = x[n]$ with index $n$ can be decomposed as low-frequency and high-frequency coefficients. Low-frequency coefficients, often referred to as the ``base" or ``smooth" components, capture the primary structures and general trends within the data:
\begin{equation}
\vx_L^k[l] = (H^k x) [l] = \sum_{n=0}^{N-1} f[n] \cdot \varphi_{k,l}[n],
\end{equation}
where \( \varphi_{k,l}[n] \) is the discrete scaling function. The scale index \( k \) and the shift parameter \( l \) define the analysis and reconstruction of signals at different scales and positions. $H$ is the low-pass operator.

Within the low-frequency bands, wavelet decomposition applies the whitening effect to coefficients and concentrates the majority of energy. Therefore, the distribution of low-frequency coefficients results in a more stable and well-conditioned distribution close to a Gaussian distribution. We provide a detailed explanation in Appendix E.1, with accompanying experiments in Appendix E Fig. \ref{fig: kl_div}. Due to the convolution with a smooth, averaging function $\varphi$ and the concentration of energy, the covariance matrix behaves in a way similar to a diagonal matrix and has relatively uniform eigenvalues after scaling, which indicates a lower condition number. This implies that after the wavelet transform, the low-frequency components of an image not only contain the majority of the energy, but exhibit a distribution nearer to Gaussian.  The whitened low-frequency wavelet coefficient distribution and lower condition number make it more suitable for SGM based on Gaussian-diffusion assumptions. 


\subsubsection{Sparsity of High-frequency Coefficients}
\label{high-fre}
Within the wavelet domain, the high-frequency coefficients usually represent rapid changes or transitions of signals, which are sparse in natural images. The sparsity in the high-frequency band primarily results from the inherent characteristics of the natural images and is well summarized by the power-law decay: most of the information and energy are concentrated in the low-frequency band, while the high-frequency band captures sparse fine textures and a small amount of energy. For a given shift parameter \( l \), the high-frequency signal can be represented in discrete form as:
\begin{equation}
\vx_H^k [l] = (G^k x)[l] = \sum_{n=0}^{N-1} x[n] \cdot \psi_{k,l}[n],
\end{equation}
where \( \psi_{k,l}[n] \) represents the discrete wavelet function and $G$ is the high-pass operator.

It is well known that natural images intrinsically present distinct distribution patterns that are not Gaussian in nature. In fact, the wavelet coefficient distribution of natural images is often modeled as a Generalized Laplacian, Gaussian scale mixture, or Generalized Gaussian scale mixture \cite{simoncelli1999modeling, gsm, ggsm, buccigrossi1999image}. While the whitening effect is applied to the low-frequency coefficients, the high-frequency signals, represented by the long-tail and peak features of the original distribution are magnified and distilled in the high-frequency bands at each scale through the differential operations. Consequently, high-frequency coefficients are usually sparse (see detailed proofs in Appendix E.2).

\subsection{Wavelet Multi-scale Generative Model}
According to the analysis of wavelet coefficients features in Appendix E, we formulate the multi-scale factorization of the generative model in the wavelet domain and design the architecture as shown in Fig. 2. We start with the multi-scale wavelet decomposition and factorize the probability of target image $p(\vx^0)$ as below:
\begin{equation}
    p(\vx^0) = \prod_{k=1}^S p(\vx_H^k|\vx_L^k) p(\vx_L^S).
\end{equation}


Notice the LL band at a finer scale $k$ is jointly determined by the low-frequency and high-frequency bands at scale $k+1$, i.e.,
\begin{equation}
    \vx_L^k = A^T (\vx_H^{k+1}, \vx_L^{k+1})^T.
\end{equation}
Here, $p(\vx_L^S)$ can be effectively approximated through the reverse diffusion process due to the whitening effect of multi-scale wavelet decomposition. On the other hand, the conditional probability of high-frequency bands on the LL band $p(\vx_H^k|\vx_L^k)$ is often a long-tail and peaked multimodal distribution and differs from simple Gaussian models considerably (See Appendix A.5 for the proof). A more detailed depiction of this conditional distribution can be derived using the Gaussian scale mixture modeling the joint distribution of wavelet coefficients and the Gaussian-like distribution of the LL band.

\paragraph{Score-based generative model in the coarsest wavelet layer.}
A deep neural network $s_\theta$ approximates the score function in the reverse process \cite{ddpm, scoresde, song2019generative}. Following Eq. \ref{loss}, the loss function for the training of $s_\theta$ is
\begin{equation}
\begin{aligned}
    \mathscr{L}_S = \mathbb{E}_t \biggl\{
    \lambda(t) \mathbb{E}_{\vx_L^S}\mathbb{E}_{\vx_{L,t}^S|\vx_L^S}\biggl[
    \biggl\| \vs_\theta(\vx_{L,t}^S,t) & - 
    \nabla_{\vx_{L,t}^S}\log q_{0t}(\vx_{L,t}^S|\vx_L^S) 
    \biggr\|^2\biggr]
    \biggr\}.
\end{aligned}
\end{equation}

This SGM learns the reverse process at the coarsest level, which resolves the ill-positionedness of the general diffusion process in the spatial domain. The reverse process in our model is described as:
\begin{definition}[Reverse Process at the Coarsest Level in the Wavelet Domain]

\begin{equation}
\label{eq:DWT-OU}
    \vx_{L,t-1}^S = \vx_{L,t}^S + \Delta t \left( \vx_{L,t}^S + 2 s_\theta(\vx_{L,t}^S, t) \right) + \sqrt{2 \Delta t} \mathbf{z}_t,
\end{equation}
where $\vx_{L,t}^S$ denotes the state of the lowest frequency coefficients at the coarsest level of the wavelet transform at time $t$ in the reverse process, $s_\theta(\vx_{L,t}^S, t)$ is the learned score function at time $t$, $\Delta t$ signifies the time step, and $\mathbf{z}_t$ represents the added Gaussian noise.
\end{definition}
\paragraph{Multi-scale adversarial learning.}
\label{msal}
Considering the marginal distribution of high-frequency wavelet bands and its spatial correlation with the LL band, we use the Adversarial Learning (AL) method \cite{gan} to model the conditional distribution of $\bar{\vx}_k$ on $\vx_k$. AL has long been used to generate photo-realistic images rapidly, and recent work demonstrates the integration of AL and diffusion models for improved sampling speed and mode coverage due to the capability of AL to learn complicated multimodal distributions \cite{creswell2018generative, gui2021review, ozbey2023unsupervised, diffusiongan, ddgan, zheng2022truncated}.

\begin{figure*}[ht]
\centering
\includegraphics[trim=0cm 5cm 0cm 2.7cm, clip, width=\linewidth]{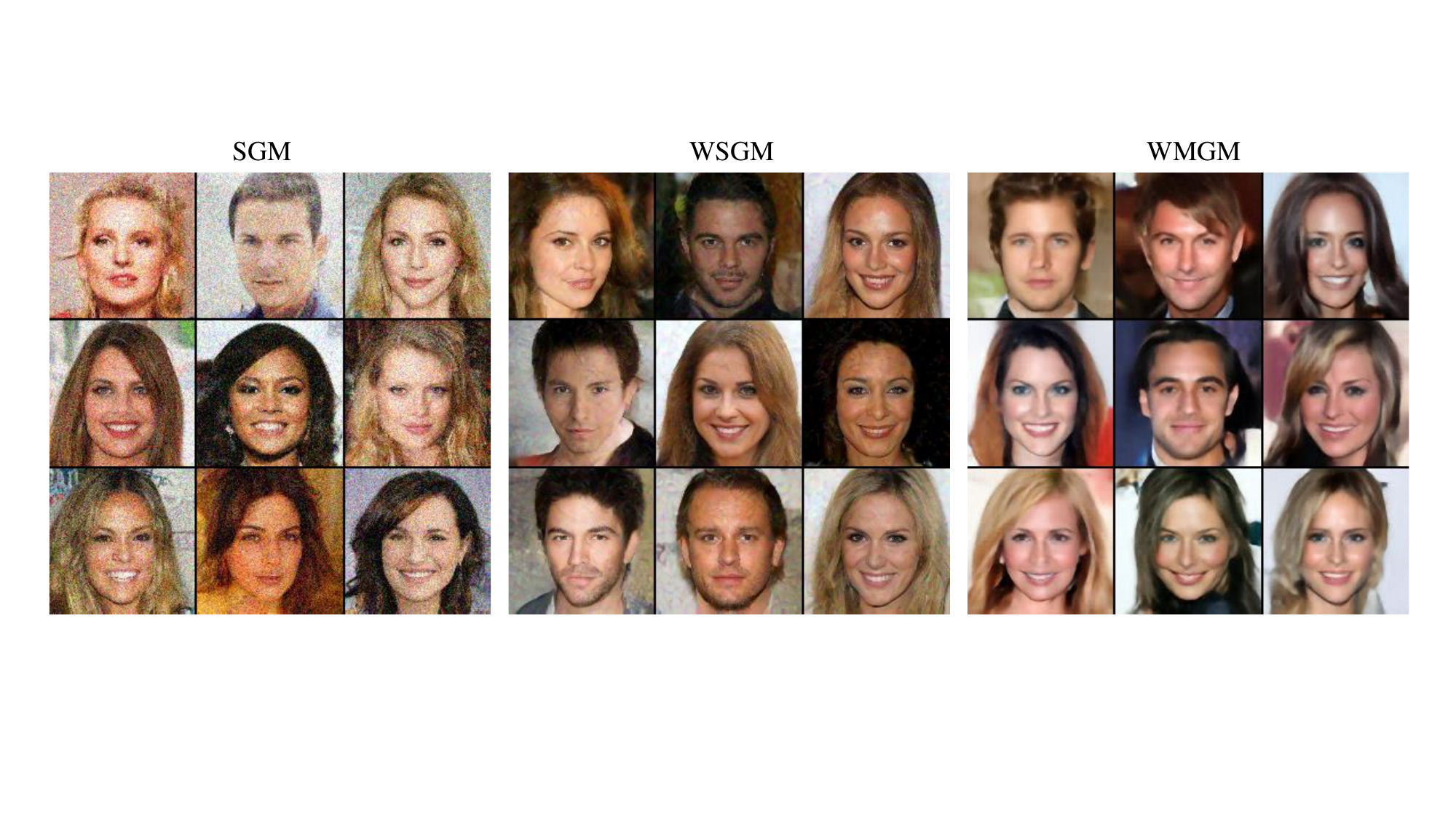}
\caption{Generated images of SGM, WSGM and our method on CelebA-HQ datasets with only 16 discretization steps.}
\label{fig: faces}
\end{figure*}

As detailed in Appendix F, we introduce the multi-scale adversarial learning model (MSAL) that learns the transformation from LL band to high-frequency bands at various wavelet scales. The generator $G$ consists of a 5-level U-Net enhanced with attention gates \cite{unet, attunet}. The training loss for a multi-resolution super-resolution model is
\begin{gather}
\begin{split}
    \mathscr{L}_G = \sum_{k=1}^S \Big[ & \lambda (G(\vx_L^k, \vz^k) - \vx_H^k)^2 
    + \nu (1 - \mathrm{SSIM}(G(\vx_L^k, \vz^k), \vx_H^k))
     - \alpha D(G(\vx_L^k, \vz^k)) \Big],
\end{split} \\
\begin{split}
    \mathscr{L}_D = \sum_{k=1}^S & \left(D(G(\vx_L^k, \vz^k)) - D(\vx_H^k) \right).
\end{split}
\end{gather}

Here, $\vz^k$ refers to random white noise at scale $k$, $\mathrm{SSIM}(\cdot,\cdot)$ is the structural similarity index measure \cite{ssim}, and $D$ is the discriminator. We penalize the generator and discriminator based on the Wasserstein distances between fake and real images \cite{wgan}. 


\paragraph{Adversarial learning.}
Due to the orthogonality of low-frequency and high-frequency basis functions in wavelet transforms, they represent different signal characteristics and are independent of each other. This independence makes it challenging to directly learn high-frequency details from low-frequency signals, as low-frequency information is insufficient to accurately reconstruct high-frequency details. Compared to studies that learn the complex sample distribution \( p(x_0) \) directly from a standard Gaussian distribution, our framework's MSAL generates samples in a single step rather than through an iterative, progressive sampling process. MSAL aims to learn a simpler conditional distribution \( p(x_k^H | x_k^L) \), where the low-frequency condition \( x_k^L \) provides an excellent reference to the target, simplifies the task, and stabilizes the training.

Through the MSAL in the wavelet domain and sharing the parameters across various scales; the MSAL allows a significant reduction in the number of trainable parameters while maintaining comparable performance to existing methods \cite{wave-score}. We detail the algorithms for the training and sampling processes of our model in Appendix G.

\section{Experiments}
\subsection{Implementation}

For training, the input images are based on 128$\times$128 resolution. We use the Adam optimizer \cite{adam} with the learning rate $10^{-4}$ for the diffusion model, and AdamW optimizers \cite{adamw} for the generator and discriminator using learning rates of $10^{-4}$ and $10^{-5}$, respectively. The diffusion model is trained with a batch size of 64 for 50000 iterations, while the generator and discriminator are trained with a batch size of 128 for 150 epochs. For evaluation metrics, we use the Fréchet inception distance (FID) \cite{heusel2017gans} to measure the image quality.The sampling time is averaged over 10 trials when generating a batch of 64 images. The training code and model weights are publicly available. All tasks are conducted on a NVIDIA V100 GPU.


\begin{table*}[ht]
	\small
				\renewcommand\arraystretch{1.2}
	\begin{center}
		\setlength{\tabcolsep}{4.4mm}{
			\resizebox{\linewidth}{!}{\begin{tabular}{ccccccc}
				\toprule
				\multirow{2}*{Methods}&\multicolumn{4}{c}{FID$\downarrow$}&\multirow{2}*{Parameters$\downarrow$}&\multirow{2}*{Sampling Time(s)$\downarrow$}\\
            \cline{2-5}
                ~&CelebA-HQ (5K)&CelebA-HQ (30K)&AFHQ-Cat (5K)&Colon&~&\\
				\midrule
                SGM&90.83&78.50 &80.93& 109.23& 160M&15.093\\ 
				WSGM&49.97&26.74 &17.12& 54.35& 351M&11.097\\ 	
				\textbf{WMGM(Ours)}&\textbf{30.58}&\textbf{25.38}&\textbf{16.29}&\textbf{45.76}&\textbf{89M}&\textbf{2.31}\\ 
				\bottomrule 
		\end{tabular}}}
  \caption{Comparison of SGM, WSGM and our method on CelebA-HQ (5K) and Celeba-HQ (30K). 16 discretization steps per scale was utilized for WSGM, our model and SGM keep the same total sampling steps and same sampling batch size (100 pictures per batch). Sampling time is the time to generate each 100 images under the same GPU resource consumption. \textbf{Bold} data represents the optimal results overall.}
		\label{table1}
		\vspace{-0.5cm}
	\end{center}
	
\end{table*}

\begin{figure*}
\centering
\includegraphics[width=0.8\linewidth]{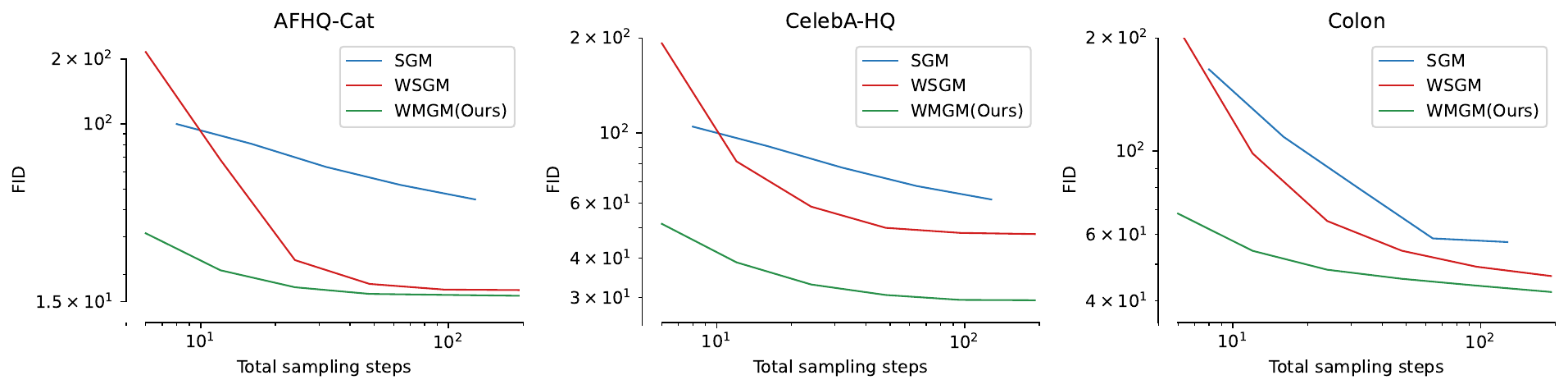}
\caption{Performances of SGM, WSGM and our method on AFHQ-Cat and CelebA-HQ datasets w.r.t. various total sampling steps.}
\label{fig: disc_step}
\vspace{-0.4cm}
\end{figure*}
\subsection{Results and Comparison}
For our experiments, we used three diverse datasets: CelebA-HQ, AFHQ, and the Colon dataset (See Appendix H for more details). We assess the proposed Weighted Multiscale Generative Model (WMGM) across varied contexts, focusing on its fast sampling capabilities and universality. We ensure consistent experimental settings for all models across all datasets, using the FID as our primary evaluation metric. We showcase our method's fast sampling capabilities through experiments on varying datasets and with different numbers of discretization steps, demonstrating the WMGM's adaptability and efficiency.

	

\paragraph{Varying datasets:} With a consistent setting of 16 discretization steps, our model outperforms both SGM~\cite{scoresde} and WSGM~\cite{wave-score}, as demonstrated in Table~\ref{table1}. This superior performance is achieved alongside significant reductions in model size and inference time. Specifically, our model, which combines a diffusion process and GAN-based sampling, requires only 89M parameters, making it 44.38\% smaller than SGM and 74.64\% smaller than WSGM. Moreover, the total sampling time of our model during inference is 72.33\% less than SGM and 62.38\% less than WSGM, reflecting the efficiency of integrating both diffusion and GAN components in our approach. Our model demonstrates the capability to generate high-quality images with just 16 total sampling steps for an image of size $128 \times 128$ as shown in Fig. \ref{fig: faces}, highlighting our model's advantage in producing quality images with faster processing. The inference time comparison accounts for the overall time taken by a single diffusion process in SGM, the aggregated time of three diffusion sampling processes in WSGM, and the combined time of one diffusion process plus GAN-based sampling in our model.

\paragraph{Varying discretization steps:} As illustrated in Fig.~\ref{fig: disc_step}, our model's performance on the AFHQ-Cat, CelebA-HQ and Colon datasets remains superior across a range of total sampling steps, from 6 to 192. In every scenario, our model achieves the highest generation quality compared to SGM and WSGM. Notably, our method achieves rapid convergence using only 16 discretization steps, whereas WSGM requires up to about 100 steps to reach a comparable FID level. These results highlight our method's fast sampling capability, making it as a significant improvement over existing diffusion approaches.
\subsection{Multi-scale Adversarial Learning}

\begin{table}[h]

\centering
{\begin{tabular}{lccc}
    \toprule
    Method & FID  $\downarrow$ & PSNR  $\uparrow$ & Parameters  $\downarrow$ \\
    \midrule
    SS-OL & 92.24 & 30.99&152M  \\
    MS-OL & 85.82 &31.13& 76M  \\
    \midrule
    SS-SL & 70.20 &31.39& 72M \\
    MS-SL & 59.38 &31.73& 36M  \\
    \midrule
    SS-AL & 65.48 & 31.59& 72M  \\

    \textbf{MS-AL} & \textbf{49.34} & \textbf{31.86}& \textbf{36M}  \\
    \bottomrule
\end{tabular}}
\captionof{table}{Performance of different models at different scales}
\label{tab:ablation_study}
\end{table}
To efficiently construct the mappings from low-frequency to high-frequency coefficients among multi-scale (MS) and single-scale (SS), we compare different learning approaches including standard learning  (SL), operator learning  (OL) and adversarial learning (AL) on dataset CelebA-HQ (5k) with training picture number of 4750 and valid picture number of 250. For detailed definition of MS and SS, please refer to Appendix I. As shown in the Table \ref{tab:ablation_study}, the AL model demonstrates significant advantages over other models. Also, among the same model, multi-scale performance is better than single scale with half model parameters, indicating that the function mapping relationships in different scales can capture and relate features more effectively, thereby improving generation quality. 
\section{Conclusion}
In summary, our work directly addresses the challenges of ill-conditioning in score-based generative models within the spatial domain and navigates the intricacies of the wavelet domain, leading to the introduction of the Wavelet Multi-Scale Generative Model (WMGM). We clarify the duality of the diffusion process between the spatial and wavelet domains and delve into the characteristics of wavelet coefficients. Our model innovatively capitalizes on the whitening effect in low-frequency coefficients and integrates a multi-scale adversarial learning (MSAL) to effectively manage the non-Gaussian distribution of high-frequency wavelet coefficients. A pivotal aspect of our model is its emphasis on fast sampling, a critical advancement that positions WMGM notably ahead of existing approaches in terms of sampling efficiency and image generation quality. 

\newpage

\bibliography{neurips_2024}
\bibliographystyle{plainnat}



\clearpage

\appendix
\section{Proof of Theorem 1}
\label{prooft1}
\begin{theorem}
\label{theorem_sp}
    Let $N \in \mathbb{N}$, $\Delta t > 0$, and $T = N\Delta t$. Then, we have that $\bar{x}_t^N \sim \mathcal{N}(\hat{\mu}_N, \Sigma^{\widehat{N}})$ with
\begin{align}
    \Sigma^{\widehat{N}} &= \Sigma + \exp(-4T)\Sigma^{\widehat{T}} + \Delta t \Psi^{\widehat{T}} + (\Delta t)^2 R^{\widehat{T},\Delta t}, \\
    \hat{\mu}_N &= \mu + \exp(-2T)\hat{\mu}_T + \Delta t e^{\widehat{T}} + \frac{(\Delta t)^2}{2} r^{T,\Delta t},
\end{align}
where $\Sigma^{\widehat{T}}, \Psi^{\widehat{T}}, R^{T,\Delta t} \in \mathbb{R}^{d \times d}$, $\hat{\mu}_T, e^{\widehat{T}}, r^{T,\Delta t} \in \mathbb{R}^d$, and $\|R^{T,\Delta t}\| + \|r^{T,\Delta t}\| \leq R$, not dependent on $T \geq 0$ and $\Delta t > 0$. We have that
\begin{align}
    \Sigma^{\widehat{T}} &= -(\Sigma - \mathrm{Id})(\Sigma\Sigma^{-1})^2, \\
    \Psi^{\widehat{T}} &= \mathrm{Id} - \frac{1}{2}\Sigma^2(\Sigma - \mathrm{Id})^{-1}\log(\Sigma) + \exp(-2T)\Psi^{\widetilde{T}}.
\end{align}
In addition, we have
\begin{align}
    \hat{\mu}_T &= -\Sigma^{-1}T \Sigma \mu, \\
    e^{\widehat{T}} &= \left\{-2\Sigma^{-1} - \frac{1}{4}\Sigma(\Sigma - \mathrm{Id})^{-1}\log(\Sigma)\right\}\mu + \exp(-2T)\widetilde{\mu}_T,
\end{align}
with $\Psi^{\widetilde{T}}, \widetilde{\mu}_T$ bounded and not dependent on $T$. Please refer to Proof S5 in \cite{wave-score} for the detialed proof outline for Theorem \ref{theorem1} and Proposition \ref{pro1}, based on the above Theorem \ref{theorem_sp}.
\end{theorem}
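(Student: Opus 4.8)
The plan is to exploit that, for Gaussian data and exact scores, the entire construction is affine-Gaussian, so $\bar{x}_t^N$ is automatically Gaussian and I only need to propagate its mean and covariance through the $N$-step recursion. Concretely, specializing Eq.~\ref{eq:uniform_discretization} to $s_t = \nabla\log p_{T-t}$ with $p = \mathcal{N}(\mu,\Sigma)$, each reverse step is an affine map $x \mapsto M_k x + b_k$ plus an independent $\mathcal{N}(0, 2\Delta t\,\mathrm{Id})$ increment, where $M_k = (1+\Delta t)\mathrm{Id} - 2\Delta t\,\Sigma_{\tau_k}^{-1}$, $b_k = 2\Delta t\,\Sigma_{\tau_k}^{-1}\mu_{\tau_k}$, and $\Sigma_\tau = e^{-2\tau}\Sigma + (1-e^{-2\tau})\mathrm{Id}$, $\mu_\tau = e^{-\tau}\mu$ are the forward OU marginals. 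Since every matrix here is a function of $\Sigma$, they are simultaneously diagonalizable; diagonalizing $\Sigma = U^\top\mathrm{diag}(\sigma_i)U$ reduces the whole theorem to $d$ independent scalar recursions, one per eigenvalue $\sigma$, and the matrix statements follow by reassembling the scalar expansions through $U$.

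For the scalar problem I would solve the recursion in closed form: the mean after $N$ steps is $\big(\prod_k m_k\big)\hat\mu_{\mathrm{init}}$ plus an affine offset, and the variance satisfies the telescoping relation $v_k = m_k^2 v_{k-1} + 2\Delta t$, where $m_k = 1 + \Delta t\,(1 - 2/s(\tau_k))$ and $s$ is the scalar forward variance. The key move is to write $\prod_k m_k = \exp\!\big(\sum_k \log m_k\big)$ and apply Euler--Maclaurin to convert the Riemann sum $\sum_k \Delta t\,(1 - 2/s(\tau_k))$ into the integral $\int_0^T (1 - 2/s(\tau))\,d\tau$ plus an explicit $\Delta t$ correction plus an $O(\Delta t^2)$ remainder, and likewise for the variance sum. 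The crucial analytic input is the identity $\int_0^\infty \tfrac{e^{-2u}(\sigma-1)}{1 + e^{-2u}(\sigma-1)}\,du = \tfrac12\log\sigma$, which is precisely what produces the $\log(\Sigma)$ and $(\Sigma - \mathrm{Id})^{-1}$ factors in $\Psi^{\widehat{T}}$ and $e^{\widehat{T}}$.

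With these expansions in hand I would read off the three contributions. The leading term reproduces the exact continuous-time result: because the exact-score reverse SDE maps the forward marginals back onto themselves, the zeroth-order covariance is $\Sigma$ and the zeroth-order mean is $\mu$. The $e^{-4T}$ covariance term arises entirely from the initialization mismatch $\mathcal{N}(0,\mathrm{Id})$ versus the correct $\mathcal{N}(\mu_T,\Sigma_T)$: the initial error $\mathrm{Id} - \Sigma_T = -e^{-2T}(\Sigma - \mathrm{Id})$ is propagated by the reverse flow, whose linearized covariance gain is $\exp\!\big(\int_0^T 2(1 - 2/s)\,d\tau\big) \sim \sigma^2 e^{-2T}$, yielding a final error $-e^{-4T}(\Sigma - \mathrm{Id})\Sigma^2$, i.e.\ $\Sigma^{\widehat T} = -(\Sigma-\mathrm{Id})\Sigma^2$; by the same mechanism with gain $\sim \sigma e^{-T}$ the mean error is $O(e^{-2T})$ with coefficient $-\Sigma\mu$. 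The surviving $\Delta t$ coefficient is the Euler-discretization error and carries the stated $\log(\Sigma)$, $(\Sigma-\mathrm{Id})^{-1}$ structure, and reassembling via $U$ recovers $\Sigma^{\widehat T}$, $\Psi^{\widehat T}$, $\hat\mu_T$, $e^{\widehat T}$.

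\textbf{Main obstacle.} The delicate part will be controlling the remainders $R^{T,\Delta t}, r^{T,\Delta t}$ uniformly in both $T \to \infty$ and $\Delta t \to 0$: a naive Euler--Maclaurin bound on an interval of growing length $T$ could degrade, so I would use the exponential decay of the $e^{-2\tau}(\sigma-1)$ terms to show the relevant integrands and their derivatives are absolutely integrable on $[0,\infty)$, making the tail contributions and hence the remainder constants bounded independently of $T$. A secondary technical point is the apparent singularity at the eigenvalue $\sigma = 1$, where $(\sigma-1)^{-1}$ occurs; this is removable, since $\tfrac{\log\sigma}{\sigma - 1}$ extends analytically, and one verifies the per-eigenvalue coefficients are smooth across $\sigma = 1$ before reassembling into matrix functions of $\Sigma$. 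Once these uniform estimates are in place, Theorem~\ref{theorem1} and Proposition~\ref{pro1} follow by inserting the expansions into $\mathrm{KL}(\mathcal{N}(\mu,\Sigma)\,\|\,\mathcal{N}(\hat\mu_N,\Sigma^{\widehat N}))$ and optimizing over $T$ and $\Delta t$.
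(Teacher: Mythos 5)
Your proposal reconstructs a proof that the paper itself never actually gives: the paper's entire ``proof'' of this statement is a citation to Proof S5 of \cite{wave-score}, and the argument used there is the same one you outline --- with exact scores and Gaussian data each reverse step is affine-Gaussian, all matrices involved are functions of $\Sigma$ and hence simultaneously diagonalizable, so the problem reduces to scalar mean/variance recursions ($\hat\mu_k = m_k\hat\mu_{k-1}+b_k$, $v_k = m_k^2 v_{k-1}+2\Delta t$) that one expands into a leading integral term, an order-$\Delta t$ discretization correction, and exponentially small initialization terms, with your identity $\int_0^\infty e^{-2u}(\sigma-1)/(1+e^{-2u}(\sigma-1))\,du = \tfrac12\log\sigma$ being exactly what generates the $\log(\Sigma)$ and $(\Sigma-\mathrm{Id})^{-1}$ factors. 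So in approach you match the cited proof, and your plan for the remainders (exponential integrability on $[0,\infty)$ giving uniformity in $T$, removability of the $\sigma=1$ singularity) is the right one. One substantive remark: your derivation actually exposes that the statement as transcribed in this paper is corrupted --- as written, $(\Sigma\Sigma^{-1})^2$ is identically $\mathrm{Id}$ and $-\Sigma^{-1}T\Sigma\mu$ is just $-T\mu$; in \cite{wave-score} these coefficients are $(\Sigma\Sigma_T^{-1})^2$ and $-\Sigma_T^{-1}\Sigma\mu$, where $\Sigma_T = e^{-2T}\Sigma+(1-e^{-2T})\mathrm{Id}$ is the forward covariance at time $T$. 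Your computation uses the large-$T$ limit of the propagation gain, $\Sigma e^{-T}$, and so lands on $-(\Sigma-\mathrm{Id})\Sigma^2$ and $-\Sigma\mu$, which are the $T\to\infty$ limits of the correct coefficients; if you keep the exact gain (note $\exp\bigl(\int_0^T(1-2/s(\tau))\,d\tau\bigr) = \sigma e^{-T}/s(T)$ exactly, with $s(\tau)=1+e^{-2\tau}(\sigma-1)$), you recover the precise formulas of \cite{wave-score} with no additional effort and avoid pushing an $O(e^{-6T})$ discrepancy into the remainder terms. The part your plan defers --- the explicit evaluation of $\Psi^{\widehat T}$ and $e^{\widehat T}$ from the second-order Taylor terms of $\log m_k$ and the trapezoid corrections --- is the bulk of the calculation, but the mechanism you identify is correct and carries through.
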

\vspace{2mm}

\begin{theorem}
\label{theorem2}
Suppose that $\nabla \log p_t(x)$ is $\varphi^2$ in both $t$ and $x$ such that:
\begin{align}
    \sup_{x,t} \left\| \nabla^2 \log p_t(x) \right\| \leq K, \qquad \quad
    \left\| \partial_t \nabla \log p_t(x) \right\| \leq M e^{-\alpha t} \|x\|
\end{align}
for some $K$, $M$, $\alpha > 0$. Then, $\|p - \tilde{p}_0\|_{\text{TV}} \leq \Psi_T + \Psi_{\Delta t} + \Psi_{T,\Delta t}$, where:
\begin{align}
    \Psi_T &= \sqrt{2}e^{-T} \operatorname{KL}\left(p \parallel \mathcal{N}(0, \mathrm{Id})\right)^{1/2}  \\
    \Psi_{\Delta t} &= 6 \sqrt{\Delta t} \left[1 + \mathbb{E}_p\left(\|x\|^4\right)^{1/4}\right] \left[1 + K + M \left(1 + 1/2\alpha\right)^{1/2}\right] \\
    \Psi_{T,\Delta t} &= o\left(\sqrt{\Delta t} + e^{-T}\right) \qquad   {\Delta t} \rightarrow 0, T \rightarrow +\infty
\end{align}
\end{theorem}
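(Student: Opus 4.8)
The plan is to prove the bound by comparing two probability measures on path space and then descending to their time-$0$ marginals. Let $\mathbb{P}$ denote the law of the ideal time-reversed diffusion: initialized from the true marginal $p_T$ and driven by the exact reverse drift $b_t(\vx) = \vx + 2\nabla\log p_{T-t}(\vx)$ of Eq.~\ref{eq:reverse}, so that its terminal marginal is exactly $p$. Let $\mathbb{Q}$ denote the law of the implementable process: initialized from $\mathcal{N}(0,\mathrm{Id})$ and driven by the drift frozen at the left endpoint $t_k$ of each interval $[t_k,t_{k+1}]$, matching the Euler--Maruyama scheme of Eq.~\ref{eq:uniform_discretization}, so that its terminal marginal is $\tilde p_0$. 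Since evaluation at time $0$ is a deterministic measurable map, the data-processing inequality gives $\|p-\tilde p_0\|_{\mathrm{TV}} \le \|\mathbb{P}-\mathbb{Q}\|_{\mathrm{TV}}$, and Pinsker's inequality in the $L^1$ normalization (so that $\|\cdot\|_{\mathrm{TV}}\le\sqrt{2\,\KL}$, which is what produces the $\sqrt{2}$ prefactor) yields $\|p-\tilde p_0\|_{\mathrm{TV}} \le \sqrt{2\,\KL(\mathbb{P}\,\|\,\mathbb{Q})}$.

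Next I would expand $\KL(\mathbb{P}\,\|\,\mathbb{Q})$ using the chain rule for relative entropy together with Girsanov's theorem. The chain rule splits off the initial-time marginals, giving the initialization term $\KL(p_T\,\|\,\mathcal{N}(0,\mathrm{Id}))$, and Girsanov applied conditionally on the starting point gives the drift-mismatch term $\tfrac14\,\E_{\mathbb{P}}\int_0^T \|b_t(\mX_t) - b_{t_k}(\mX_{t_k})\|^2\,dt$ (the factor $\tfrac14$ coming from the diffusion coefficient $\sqrt{2}$). Applying subadditivity of the square root, $\sqrt{2(a+b)}\le\sqrt{2a}+\sqrt{2b}$, then separates the global bound into exactly the two pieces $\Psi_T$ and $\Psi_{\Delta t}$.

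For the initialization piece I would invoke the exponential contraction of relative entropy along the Ornstein--Uhlenbeck semigroup: because $\mathcal{N}(0,\mathrm{Id})$ is its stationary law and satisfies a log-Sobolev inequality with constant $1$, the entropy decays at rate $2$, so $\KL(p_T\,\|\,\mathcal{N}(0,\mathrm{Id})) \le e^{-2T}\,\KL(p\,\|\,\mathcal{N}(0,\mathrm{Id}))$, which gives $\Psi_T=\sqrt{2}\,e^{-T}\,\KL(p\,\|\,\mathcal{N}(0,\mathrm{Id}))^{1/2}$. For the drift-mismatch piece I would decompose $b_t(\mX_t)-b_{t_k}(\mX_{t_k})$ into a spatial increment and a temporal increment. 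The spatial part is handled by the Hessian bound $\|\nabla^2\log p_t\|\le K$, which makes the drift $(1+2K)$-Lipschitz and lets me control it by $\|\mX_t-\mX_{t_k}\|$, whose second moment is $O(\Delta t)$ by Itô isometry and a standard step-size moment estimate. The temporal part is handled by the assumption $\|\partial_t\nabla\log p_t(\vx)\|\le M e^{-\alpha t}\|\vx\|$; integrating the factor $e^{-2\alpha t}$ over time produces the $(1+1/2\alpha)^{1/2}$ term, while the $\|\vx\|$ dependence brings in a fourth moment after the Cauchy--Schwarz pairing with the spatial increment. Propagating this moment uniformly back to $\mathbb{E}_p(\|\vx\|^4)$ and collecting the constants then assembles $\Psi_{\Delta t}=6\sqrt{\Delta t}\,[1+\mathbb{E}_p(\|\vx\|^4)^{1/4}]\,[1+K+M(1+1/2\alpha)^{1/2}]$, with the quadratic-in-$\Delta t$ and mixed remainders absorbed into $\Psi_{T,\Delta t}=o(\sqrt{\Delta t}+e^{-T})$.

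The hard part will be the drift-mismatch estimate. Three technical points need care: justifying the Girsanov change of measure through a Novikov-type integrability condition, which the Hessian and moment hypotheses supply; establishing uniform-in-$t$ fourth-moment propagation for the reverse process so the per-step increments can legitimately be bounded in terms of $\mathbb{E}_p(\|\vx\|^4)$; and combining the spatial Lipschitz contribution with the temporal regularity contribution so the constants close into the stated product form rather than a looser sum. The initialization bound and the Pinsker/data-processing reductions are, by contrast, routine. Throughout I would follow the non-Gaussian analysis of \cite{guth2022wavelet}, which extends the Gaussian computation underlying Theorem~\ref{theorem1} to the regularity-controlled setting assumed here.
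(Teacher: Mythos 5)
You should know that the paper does not actually prove this theorem: Appendix~A states it verbatim and defers entirely to the reference \cite{wave-score} (``Theorem \ref{theorem2} provided in \cite{wave-score} generalizes Theorem \ref{theorem1} to non-Gaussian processes''), so there is no in-paper argument to compare against line by line. That said, your outline reconstructs exactly the strategy used in the cited source's own proof, and your reductions are correct: data processing from path measures to time-$0$ marginals; Pinsker in the $L^1$ normalization (which is indeed what produces the $\sqrt{2}$); the entropy chain rule splitting off $\KL(p_T \parallel \mathcal{N}(0,\mathrm{Id}))$; Girsanov with the $\tfrac14$ factor coming from the diffusion coefficient $\sqrt{2}$; exponential entropy decay along the OU semigroup (log-Sobolev constant $1$, rate $e^{-2T}$) giving $\Psi_T$; and the spatial/temporal splitting of the drift mismatch, with the Hessian bound $K$ controlling the spatial increment and the $M e^{-\alpha t}\|x\|$ hypothesis integrating to the $(1+1/2\alpha)^{1/2}$ factor. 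The one caveat is that your proposal is a plan rather than a proof: the pieces you yourself flag as hard --- Novikov-type justification of the change of measure, uniform fourth-moment propagation along the reverse process (needed because under $\mathbb{P}$ one has $\mX_t \sim p_{T-t}$, so moments must be traced back to $\E_p\|x\|^4$ through the OU flow), and the bookkeeping that yields the specific constant $6$ and the product form of $\Psi_{\Delta t}$ rather than a sum --- are precisely where the cited reference spends most of its effort, and none of them is carried out here. So your attempt is faithful in approach to the proof the paper points to, but it should be regarded as a correct skeleton with the quantitative core left unexecuted.
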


Theorem \ref{theorem2} provided in \cite{wave-score} generalizes Theorem \ref{theorem1} to non-Gaussian processes.

\section{Wavelet Transform}
\label{sup: Wavelet Transform}

The wavelets represent sets of functions that result from dilation and translation from a single function, often termed as `mother function', or `mother wavelet'. For a given mother wavelet $\psi(x)$, the resulting wavelets are written as 
\begin{equation}
    \psi_{a,b}(x) = \frac{1}{\vert a\vert^{1/2}}\psi\left(\frac{x-b}{a} \right)  \quad a, b\in \mathbb{R}, a\neq 0, x\in D,
\end{equation}
where $a, b$ are the dilation, translation factor, respectively, and $D$ is the domain of the wavelets under consideration. In this work, we are interested in the compactly supported wavelets, or $D$ is a finite interval $[m, n]$, and we also take $\psi\in L^2$. 
For the rest of this work, without loss of generality, we restrict ourself to the finite domain $D = [0, 1]$, and extension to any $[m, n]$ can be simply done by making suitable shift and scale.
\subsection{1D Discrete Wavelet Transform}
\paragraph{Multiresolution analysis (MRA).} The Wavelet Transform employs Multiresolution Analysis (MRA) as a fundamental mechanism. The basic idea of MRA is to establish a preliminary basis in a subspace $V_0$ of $L^2(\mathbb{R})$, and then use simple scaling and translation transformations to expand the basis of the subspace $V_0$ into $L^2(\mathbb{R})$ for analysis on multiscales.

For $k\in \mathbb{Z}$ and $k \in \mathbb{N}$, the space of scale functions is defined as: ${\bf V}_{k} = \{f\vert $the restriction of $f$ to the interval $(2^{-k}l, 2^{-k}(l+1))$, for all $l = 0, 1, \hdots, 2^k-1$, and f vanishes elsewhere$\}$. Therefore, the space ${\bf V}_{k}$ has dimension $2^k$, and each subspace ${\bf V}_{i}$ is contained in ${\bf V}_{i+1}$: 
\begin{equation}
    {\bf V}_{0}\subset{\bf V}_{1}\subset{\bf V}_{2}\subset\hdots{\bf V}_{n}\subset\hdots .
\end{equation}
Given a  basis $\varphi(x)$ of ${\bf V}_{0}$, the space ${\bf V}_{k}$ is spanned by $2^n$ functions obtained from $\varphi(x)$ by shifts and scales as
\begin{equation}
    \varphi_{l}^{k}(x) = 2^{k/2}\varphi(2^{k}x-l), \quad l=0,1,\hdots,2^k-1.
\end{equation}
The functions $\varphi_l(x)$ are also called scaling functions which can project a function to the approximation space ${\bf V}_{0}$.  
The wavelet subspace ${\bf W}_{k}$ is defined as the orthogonal complement of ${\bf V}_{k}$ in ${\bf V}_{k+1}$, such that:
\begin{equation}
    {\bf V}_{k}\bigoplus{\bf W}_{k} = {\bf V}_{k+1}, \quad {\bf V}_{k}\perp{\bf W}_{k};
\label{eqn:VplusW}
\end{equation}
and ${\bf W}_{k}$ has dimension $2^k$. Therefore, the decomposition can be obtained as
\begin{equation}
    {\bf V}_{k}= {\bf V}_{0}\bigoplus{\bf W}_{0}\bigoplus{\bf W}_{1}\cdots \bigoplus{\bf W}_{k-1}.
\end{equation}

To form the orthogonal basis for ${\bf W}_{k}$, the basis is constructed for $L^2(\mathbb{R})$. The bases can be obtained  by translating and dilating from the wavelet function which is orthogonal to the scaling function. The wavelet function $\psi(x)$ shown as follows:
\begin{equation}
    \psi_{l}^{k}(x) = 2^{k/2}\psi_{}(2^{k}x-l), \quad l=0,1,\hdots,2^k-1.
\end{equation}
where the wavelet function $\psi(x)$ are orthogonal to low-order functions (vanishing moments), here is an example for first-order polynomial:
\begin{equation}
    \int_{-\infty}^{\infty}x\psi_{j}(x)dx = 0.
\end{equation}
Wavelet properties of orthogonality and vanishing moments are pivotal for effective data representation. Orthogonality ensures each wavelet coefficient distinctly captures specific data features without overlap. In contrast, the vanishing moments allow wavelets to efficiently encapsulate smooth polynomial trends in the data, facilitating both clear approximations and feature extraction. Together, these attributes make wavelets adept at concise and unambiguous data analysis. 

\paragraph{Wavelet decomposition.}
The Discrete Wavelet Transform (DWT) offers a multi-resolution analysis of signals, gaining popularity in signal processing, image compression, and numerous other domains. 

For a discrete signal $f[n]$, we can obtain its approximation coefficients $cA$ and detail coefficients $cD$ using the scaling function $\phi(x)$ and the wavelet function $\psi(x)$, respectively. This is done by convolving $f[n]$ with the respective filters followed by downsampling:
\begin{align}
    \begin{aligned}
        cA[k] &= \sum_{n} h[n-2k] f[n] \\
        cD[k] &= \sum_{n} g[n-2k] f[n] 
    \end{aligned}
\end{align}

where $h[n]$ and $g[n]$ denote the low-pass and high-pass filters respectively. The decomposition process can be recursively applied to the approximation coefficients $cA$ for deeper multi-level decompositions. Each subsequent level reveals coarser approximations and finer details of the signal.
\paragraph{Wavelet reconstruction.}
Upon having decomposed a signal using the DWT, reconstruction aims to rebuild the original signal from its wavelet coefficients. This process uses the inverse wavelet and scaling transformations, necessitating another pair of filters known as reconstruction or synthesis filters. 

For a given set of approximation coefficients $cA$ and detail coefficients $cD$, the reconstructed signal $f'[n]$ can be obtained using:
\begin{align}
   f'[n] = \sum_{k} cA[k] \cdot h_0[n-2k] + cD[k] \cdot g_0[n-2k]
\end{align}
Here, $h_0[n]$ and $g_0[n]$ are the synthesis filters related to the decomposition filters $h[n]$ and $g[n]$. Typically, these synthesis filters are closely tied to the decomposition filters, often being their time-reversed counterparts. 

For multi-level decompositions, the reconstruction process is conducted in a stepwise manner. Starting from the coarsest approximation, it's combined with the detail coefficients from the highest level. This resultant signal then acts as the approximation for the next level, and the procedure is iteratively repeated until the finest level is reached, thus completing the reconstruction.




\subsection{2D Discrete Wavelet Transform}

The 2D DWT is a pivotal technique for image analysis. Unlike the traditional Fourier Transform which primarily provides a frequency view of data, the 2D DWT offers a combined time-frequency perspective, making it especially apt for analyzing non-stationary content in images.

For a given image \(I(x,y)\), its wavelet transform is achieved using a pair of filters: low-pass and high-pass, followed by a down-sampling operation. The outcome is four sets of coefficients: approximation coefficients, horizontal detail coefficients, vertical detail coefficients, and diagonal detail coefficients.

This transformation can be mathematically represented as:
\begin{equation}
\begin{gathered}
\text{Approximation Coefficients:} \\
\text{LL} = \text{DWT}(I(x,y) \ast h(x) \ast h(y)) \\
\text{Horizontal Detail Coefficients:} \\
\text{LH} = \text{DWT}(I(x,y) \ast h(x) \ast g(y)) \\
\text{Vertical Detail Coefficients:} \\
\text{HL} = \text{DWT}(I(x,y) \ast g(x) \ast h(y)) \\
\text{Diagonal Detail Coefficients:} \\
\text{HH} = \text{DWT}(I(x,y) \ast g(x) \ast g(y))
\end{gathered}
\end{equation}

where \(h(x,y)\) and \(g(x,y)\) are the low-pass and high-pass filters, respectively.

It is crucial to note that these transformations are typically followed by a down-sampling operation. The convolution with wavelet filters provides a multi-resolution representation, and the down-sampling reduces the spatial resolution, leading to the hierarchical structure characteristic of wavelet decompositions. The 2D DWT offers a comprehensive approach to understand the intricate details of images by bridging the time (spatial) and frequency domains. This representation not only simplifies image analysis but also offers unique insights unattainable through conventional frequency-only transformations.

\subsection{Haar Wavelet Transform}
\label{sup:Haar}
As one of the most basic wavelets, Haar wavelet has simplicity and orthogonality, ensuring its effective implementation in digital signal processing paradigms. The straightforward and efficient characteristics of Haar wavelets make them highly practical and popular for various applications, and their successful implementation in other works prompted us to utilize them in our project as well. When we perform wavelet decomposition on an image using the discrete wavelet transform (DWT), we can obtain an approximate representation that captures the main features or overall structure of the image, as well as finer details that capture the high-frequency information in the image. Through further multi-resolution analysis (MRA), we can view the image at different scales or levels, resulting in a view containing more detail.

The Haar wavelet and its associated scaling function:
\begin{equation}
\begin{aligned}
\psi(t) &= 
\begin{cases} 
1 & \text{for } 0 \leq t < 0.5 \\
-1 & \text{for } 0.5 \leq t < 1 \\
0 & \text{elsewhere}
\end{cases}, \\
\phi(t) &= 
\begin{cases} 
1 & \text{for } 0 \leq t < 1 \\
0 & \text{elsewhere}
\end{cases},
\end{aligned}
\end{equation}

where $\psi(t)$ is the Haar wavelet function, and $\phi(t)$ is the Haar scaling function.

The Haar wavelet transform has broad applications in two-dimensional space, especially in image analysis. This 2D extension retains its inherent decomposition method for 1D signals but applies it sequentially to the rows and columns of the image. With these definitions in hand, the filter coefficients are computed by evaluating the inner products. Specifically, for the Haar wavelet, we have:

\begin{itemize}
    \item Low-pass filter coefficients (h):
    \begin{equation}
    \begin{aligned}
    h_0 &= \int_{0}^{1} \phi(t) \phi(2t) dt, \\
    h_1 &= \int_{0}^{1} \phi(t) \phi(2t-1) dt.
    \end{aligned}
    \end{equation}

    \item High-pass filter coefficients (g):
    \begin{equation}
    \begin{aligned}
    g_0 &= \int_{0}^{1} \psi(t) \psi(2t) dt, \\
    g_1 &= -\int_{0}^{1} \psi(t) \psi(2t-1) dt.
    \end{aligned}
    \end{equation}

\end{itemize}
For filter coefficients, usually we want them to be unitized (i.e. their L2 norm is 1). After calculation, $h_0 = \frac{1}{\sqrt{2}}$, and similarly for the other coefficients. Consequently, the Haar filter coefficients are:
\begin{equation}
\begin{aligned}
h &= \left[\frac{1}{\sqrt{2}}, \frac{1}{\sqrt{2}}\right], 
&\quad 
g &= \left[\frac{1}{\sqrt{2}}, -\frac{1}{\sqrt{2}}\right].
\end{aligned}
\end{equation}

When extending the one-dimensional DWT to two dimensions for image processing, the filter coefficients are used in a matrix form to operate on the image. When applying the filters horizontally on rows, we consider the outer product of the filter vector with a column unit vector. Similarly, for the vertical operation down the columns, the outer product of the filter vector with a row unit vector is considered. The two-dimensional filter matrices become:
\begin{equation}
\begin{aligned}
H &= h^T \times h = 
\begin{bmatrix}
\frac{1}{\sqrt{2}} & \frac{1}{\sqrt{2}} \\
\frac{1}{\sqrt{2}} & \frac{1}{\sqrt{2}}
\end{bmatrix},\\
G &= g^T \times g = 
\begin{bmatrix}
\frac{1}{\sqrt{2}} & -\frac{1}{\sqrt{2}} \\
-\frac{1}{\sqrt{2}} & \frac{1}{\sqrt{2}}
\end{bmatrix}.
\end{aligned}
\end{equation}

The matrix \(H\) corresponds to low-pass filtering in the horizontal and vertical directions, representing the approximate information in the image, while \(G\) corresponds to the high-pass filtering in the horizontal and vertical directions, capturing the details in the image. When performing the wavelet transformation of an image using Haar wavelets, these matrices are used to derive the approximation($LL$), horizontal detail($LH$), vertical detail($HL$), and diagonal detail($HH$) coefficients.

\section{Diffusion in Wavelet Domain}
We first investigate the effect of noise evolution in the spatial domain. In our image perturbation analysis, we observed the ramifications of incrementally introduced Gaussian noise in the spatial domain. Initially, the superimposed Gaussian noise manifests predominantly as high-frequency perturbations. With an increase in the strength and duration of noise injection, these perturbations start to mask the primary structures of the original image, causing the entire image to be progressively characterized by Gaussian noise attributes. This renders the image increasingly homogeneous, dominated by high-frequency disturbances.

The influence of noise evolution extends beyond the spatial domain, and its progressive perturbation to different frequency bands of the image can be better understood from a wavelet domain perspective. When the perturbed images are subjected to a wavelet transform, we noted a series of effects:

\paragraph{Concentration of high-frequency effects.} Recall the power spectra of natural images following the $\frac{1}{f^2}$ decaying rule \cite{field1987relations, van1996modelling} and the constant power spectra of Gaussian noise. Therefore, as noise is introduced, it predominantly manifests in the high-frequency subbands (LH, HL, HH). This is due to the wavelet transform's ability to separate out high-frequency details from the low-frequency approximations of an image.

\paragraph{Cumulative effect.} As more noise is added, it not only augments the existing high-frequency perturbations but also starts to influence the low-frequency content, especially as the strength and/or duration of the noise becomes significant. Gradually, the noise begins to leave its footprint in the low-frequency subband (LL) as the original low-frequency content gets progressively masked or drowned by the noise.

\paragraph{Multi-scale structure of wavelets.} The wavelet transform possesses a multi-resolution analysis characteristic. In the initial wavelet decomposition, the LL subband still contains most of the energy and primary information of the image, while the LH, HL, and HH subbands capture finer details. As the amount of introduced noise reaches a certain threshold, these fine details and the main structure of the original image get masked by the noise, leading to a more uniform distribution of energy across both low and high-frequency subbands. Over time, the coefficient distribution in the LL subband begins to resemble a Gaussian distribution more closely.

In summary, the gradual introduction of noise first impacts the high-frequency subbands and, as noise accumulates, the low-frequency subbands are also affected. When the noise level is sufficiently high, the entire image becomes dominated by the Gaussian noise, whether in the spatial or wavelet domain.

\section{Duality of Diffusion Process in Spatial Space and Wavelet Domain}
\label{sup:duality}

\subsection{Forward Process}

To simplify the notations, let $\vx$ be an image vector, and we can write the discrete wavelet transform (DWT) as:
$$
\hat{\mX} = \mA \mX,\quad \mX\in \mathbb{R}^d.
$$
Here, $\mA$ is the discrete wavelet matrix. This matrix is an orthogonal matrix, i.e., $\mA \mA^\top=\mI$. Several choices of $\mA$ are widely applied, such as Haar wavelets. 

For the score-based generative modeling process, we consider the forward/noising process. This process can be mathematically formulated as the Ornstein--Uhlenbeck (OU) process. The general time--rescaled OU process can be written as
\begin{equation}
\label{eq:OU-15}
d\mX_t=-g(t)^2\mX_t dt+\sqrt{2}g(t) d \mB_t.
\end{equation}

Here, $\mB_t$ is a standard d--dimensional Brownian motion. We perform DWT to $\mX_t$ and figure out $\hat{\mX}_t$ also observes the same OU process.
\begin{equation}
\label{eq:DWT-OU_}
d\hat{\mX}_t=-g(t)^2\hat{\mX}_t dt+\sqrt{2}g(t) \mA d \mB_t,\quad \hat{\mX}_0=\mA\mX_0.
\end{equation}

Let $\hat{\mB}_t=\mA \mB_t$, $\hat{\mB}_t$ is also a standard Brownian motion. We let $\mX_0$ be sampled from distribution $p$. Then $\hat{\mX}_0$ is from the distribution 
\begin{equation}
q=\mathcal{T}_\mA\# p.
\end{equation}

Here, $\mathcal{T}_\mA$ is the $\mA$ linear transform operation, and $\#$ is the pushforward operation, which gives 
\begin{equation}
q(\vx) = p(\mA^\top \vx).
\end{equation}


Let $p_t$ be the density distribution of $\mX_t$, $q_t$ be the density distribution of $\hat{\mX}_t$. We have
\begin{equation}
q_t=\mathcal{T}_\mA \# p_t,\quad q_t(\vx)=p_t(\mA^\top \vx).
\end{equation}

Let 
\begin{equation}
\vs_t=\nabla \log p_t,\quad  \vr_t=\nabla \log q_t. 
\end{equation}

be the score functions of two processes. We have
\begin{equation}
\vr_t(\vx)=\frac{\nabla q_t(\vx)}{q_t(\vx)} =\frac{\mA \nabla p_t(\mA^\top \vx) }{p_t(\mA^\top\vx)} = \mA \vs_t(\mA^\top \vx).
\end{equation}

\subsection{Denoising/Reverse Process}
We use $\mX^\leftarrow_t$ and $\hat{\mX}^\leftarrow_t$ to denote the reverse process. With the common assumption that $g(t)=1$ in standard diffusion models, the reverse processes follow: 
\begin{equation}
\begin{aligned}
d\mX^\leftarrow_t &= \left( \mX^\leftarrow_t + 2\vs_{T-t}(\mX^\leftarrow_t) \right)dt + \sqrt{2}d\mB_t, \\
d\hat\mX^\leftarrow_t &= \left( \hat\mX^\leftarrow_t + 2\vr_{T-t}(\hat\mX^\leftarrow_t) \right)dt + \sqrt{2}d\hat \mB_t.
\end{aligned}
\end{equation}

Here, $\hat\mB_t=\mA\mB_t$. We look into the second SDE. 
\begin{align}
d\hat\mX^\leftarrow_t&=\left( \hat\mX^\leftarrow_t+2\vr_{T-t}(\hat\mX^\leftarrow_t) \right)dt+\sqrt{2}d\hat \mB_t \nonumber \\
&=\left( \hat\mX^\leftarrow_t+2\mA\vs_{T-t}(\mA^\top\hat\mX^\leftarrow_t) \right)dt+\sqrt{2}d\hat \mB_t \nonumber \\
\mA^\top d\hat\mX^\leftarrow_t&=\left( \mA^\top\hat\mX^\leftarrow_t+2\vs_{T-t}(\mA^\top\hat\mX^\leftarrow_t) \right)dt+\sqrt{2} \mA^\top d \hat \mB_t.
\end{align}

Replacing $\mA^\top\hat{\mX}^\leftarrow_t$ by $\mX^\leftarrow_t$. We can get back to the first equation. The training processes for $\vs_\theta, \vr_{\hat{\theta}}$with $\mX_t^{(i)}, \hat{\mX}_t^{(i)}$ also following the same standard denoising score matching loss function as follows:

\begin{align}
\mathbb{E}_t \left\{
\lambda(t) \mathbb{E}_{\mX_0}\mathbb{E}_{\mX_t|\mX_0}\left[\left\| \vs_\theta(\mX_t,t)-\nabla_{\mX_t}\log p_{0t}(\mX_t|\mX_0) \right\|^2\right]
\right\} \notag \\
\mathbb{E}_t \left\{
\hat\lambda(t) \mathbb{E}_{\hat\mX_0}\mathbb{E}_{\hat\mX_t|\hat\mX_0}\left[\left\| \vr_{\hat\theta}(\hat\mX_t,t)-\nabla_{\hat\mX_t}\log q_{0t}(\hat\mX_t|\hat\mX_0) \right\|^2\right]
\right\}.
\end{align}

The forward and reverse probability distribution function $p_{0t}$ and $q_{0t}$ are defined following the standard SGM model, in other words,
\begin{align}
p_{0t}(\mX_t|\mX_0) &= \mathcal{N}(\mX_t; \sqrt{\bar\alpha_t}\mX_0, (1-\bar\alpha_t)\mI), \notag \\
q_{0t}(\hat\mX_t|\hat\mX_0) &= \mathcal{N}(\hat\mX_t; \sqrt{\bar\alpha_t}\hat\mX_0, (1-\bar\alpha_t)\mI).
\end{align}

\section{Analysis of Wavelet Coefficient Features}
\subsection{Gaussian Tendency of Low-Frequency Coefficients in Higher Scales}
\label{whiten}

\begin{figure}[ht]
    \centering
    \includegraphics[width=\linewidth]{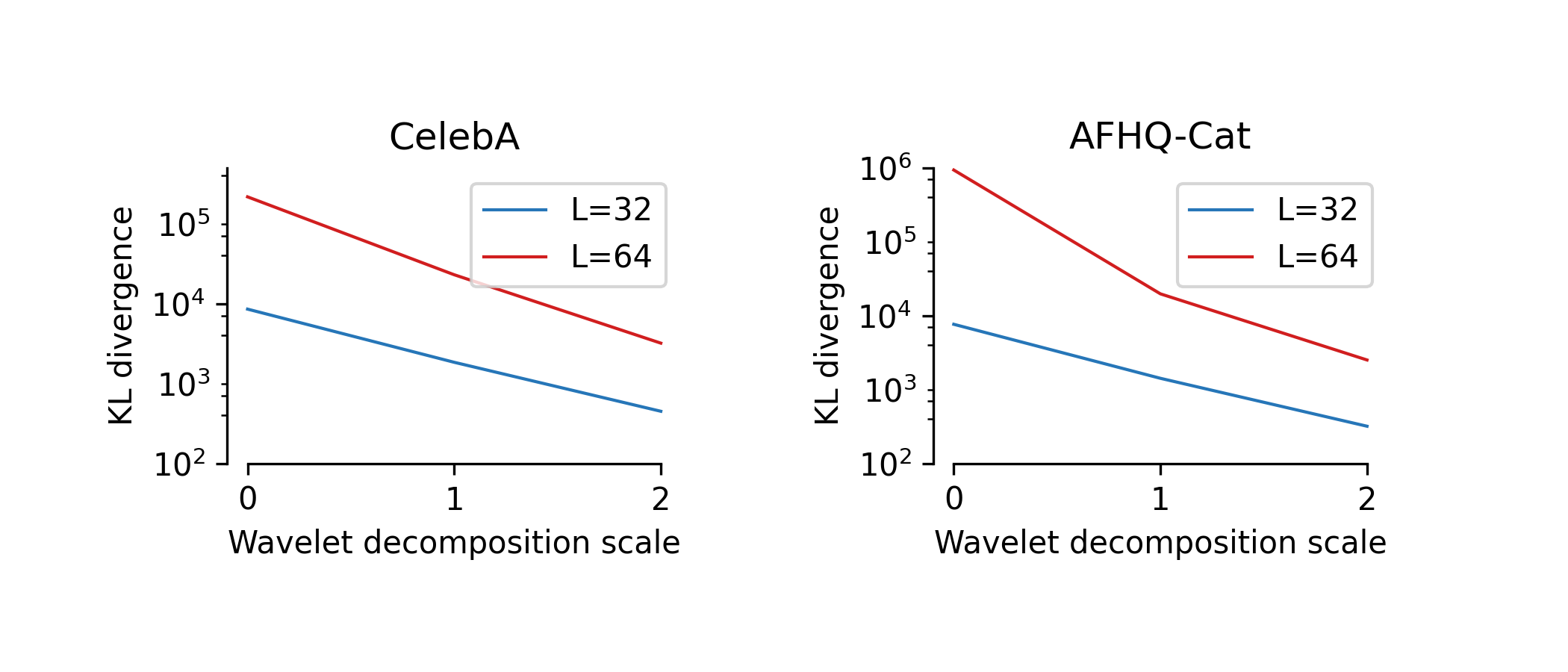}
    \caption{KL divergence of sample distribution (scale = 0) and LL coefficient distributions (scale = 1,2) to standard Gaussian distribution. Images were downsampled to size $L\times L$ before wavelet decomposition.}
    \label{fig: kl_div}
\end{figure}

We first experimentally showcase the KL divergence between sample distribution of low-frequency coefficients and standard Gaussian distribution on CelebA-HQ and AFHQ-Cat datasets. 2-scale wavelet decomposition was implemented to each image, and the sample mean and covariance were calculated accordingly to the raw image (scale 0) and LL subbands at scale 1 and 2. The KL divergence of sample distribution to standard Gaussian is detailed in \ref{eva}.

In an image, pixel intensities are represented as random variables, with adjacent pixels exhibiting correlation due to their spatial proximity. This correlation often follows a power-law decay:

\begin{equation}
C(d) = \frac{1}{(1 + \alpha d)^\beta},
\end{equation}
where \( C(d) \) is the correlation between pixels separated by distance \( d \), and \( \alpha \) and \( \beta \) characterize the rate of decay.

The wavelet transform (i.e., Haar wavelet transform), particularly its down-sampling step, increases the effective distance \( d \) among pixels, thereby reducing their original spatial correlation. This reduction is crucial for applying the generalized Central Limit Theorem \cite{rosenblatt1956central,ekstrom2014general}, which requires that the individual variables (pixels, in this case) are not strongly correlated.

At scale \(k\) in the wavelet decomposition, the low-frequency coefficients, \(\bar{X}_k\), representing the average intensity over \(n_k\) pixels, are calculated as:

\begin{equation}
\bar{X}_k = \frac{1}{n_k}(X_1 + X_2 + \cdots + X_{n_k}),
\end{equation}
where \( n_k \) is the number of pixels in each group at scale \(k\).

As the scale increases, the effect of averaging over larger groups of pixels, combined with the reduced correlation due to down-sampling, leads to a scenario where the generalized Central Limit Theorem can be applied. Consequently, the distribution of \(\bar{X}_k\) tends towards a Gaussian distribution:

\begin{equation}
\bar{X}_k \xrightarrow{\text{d}} \mathcal{N}(\mu_k, \frac{\sigma_k^2}{n_k}),
\end{equation}

where \(\mu_k\) and \(\sigma_k^2\) are the mean and variance of the averaged intensities at scale \(k\), respectively. This Gaussian tendency becomes more pronounced at higher scales due to the combination of reduced pixel correlation and the averaging process.

\subsection{Sparse Tendency of High-frequency Wavelet Coefficients} \label{sparsity}

\begin{figure}[ht]
	\centering
    \includegraphics[width=1\linewidth]{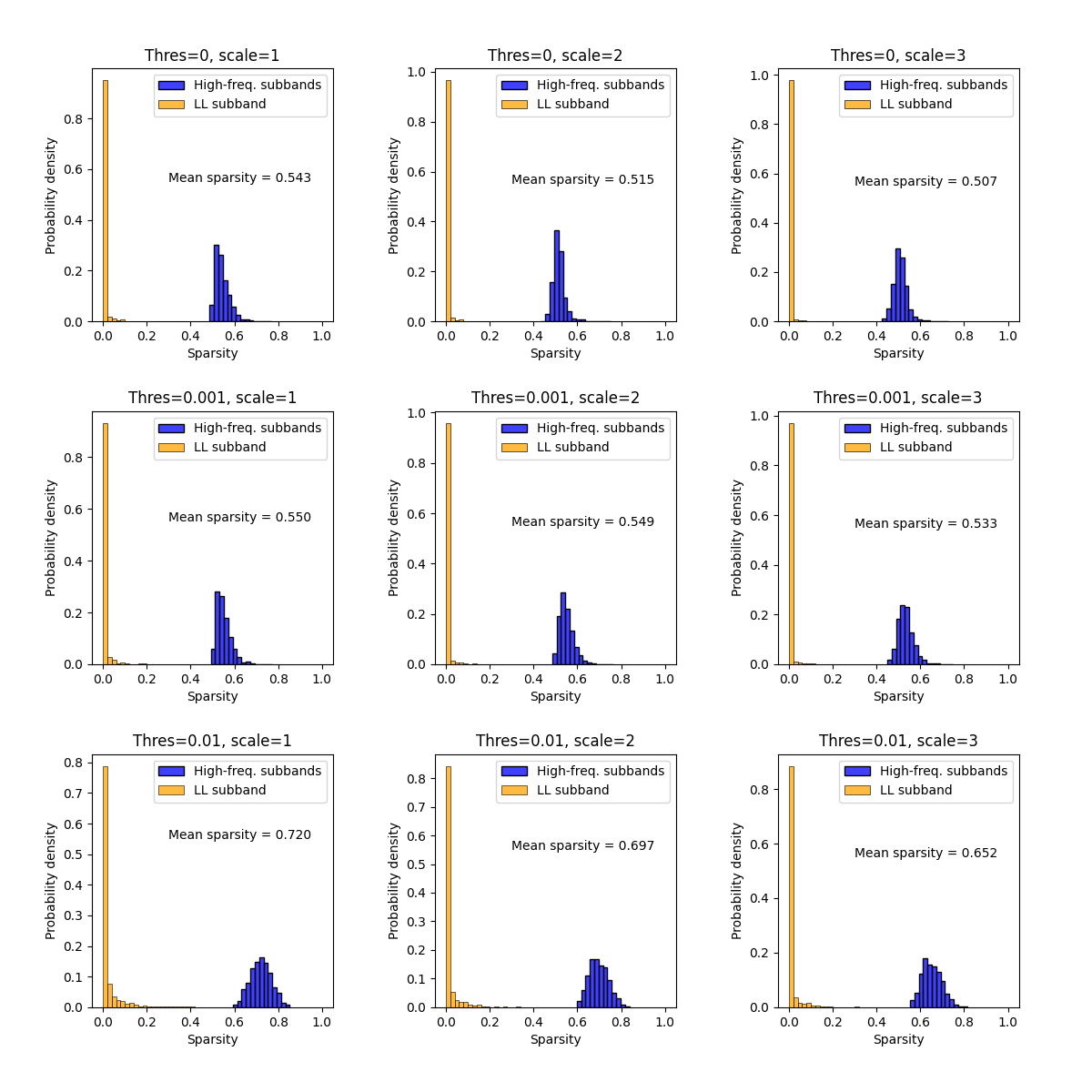}
    \caption{High frequency coefficient sparsity of CelebA-HQ images. Low-magnitude coefficients smaller than $\mathrm{Thres}$ is recognized as white noise and filtered to zero.}
    \label{fig: sparsity}
\end{figure}

By examining the statistical sparsity of images in the CelebA-HQ dataset, we show that the distribution of high-frequency wavelet coefficients is highly non-Gaussian. For a given image $\vx$ and threshold $t$, the sparsity of its high-frequency coefficients at $k$-scale is defined as:
\begin{equation}
    s(\vx_H^k) = \frac{\lVert \textbf{1}\{\vx_H^k \leq t\} \rVert}{L^2},\ k=1,2,\ldots
\end{equation}
Here $\lVert\cdot\rVert$ is the norm counting the number of 1s in the vector.
In this way, we could estimate the expected sparsity of the true marginal distribution $p(\vx_H^k)$. Considering that the LL coefficients with approximate Gaussian distribution given the whitening effect of wavelet decomposition, we have the following proposition.
\begin{proposition}
For a sufficiently large $k$, if the expected sparsity of $\vx_H^k$ has a lower bound $\alpha$
\begin{equation}
    \mathbb{E}(s(\vx_H^k)) \geq \alpha,
\end{equation}
where $\alpha\in [0,1]$. Then the conditional expected sparsity of $\vx_H^k$ on $\vx_L^k$ is bounded by
\begin{equation}
    \mathbb{E}(s(\vx_H^k)|\vx_L^k) \geq \alpha - \varepsilon,
\end{equation}
where $\varepsilon > 0$ is a small positive number determined by $k$.
\end{proposition}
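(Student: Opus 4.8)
The plan is to exploit the fact that, at coarse scales, the high-frequency band $\vx_H^k$ becomes asymptotically independent of the low-frequency band $\vx_L^k$, so that conditioning on $\vx_L^k$ perturbs the expected sparsity by only a vanishing amount. The natural starting point is the law of total expectation, which already links the conditional sparsity to the hypothesised marginal bound:
\begin{equation}
\mathbb{E}\big(s(\vx_H^k)\big) = \mathbb{E}_{\vx_L^k}\Big[\mathbb{E}\big(s(\vx_H^k)\,\big|\,\vx_L^k\big)\Big] \geq \alpha .
\end{equation}
It therefore suffices to control, uniformly over the conditioning variable, the gap between the conditional expectation $\mathbb{E}(s(\vx_H^k)\,|\,\vx_L^k)$ and its average value $\mathbb{E}(s(\vx_H^k))$.

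Next I would establish approximate independence of the two bands. Two earlier ingredients feed into this. First, by the orthogonality of the scaling function $\varphi_{k,l}$ and the wavelet function $\psi_{k,l}$ (Appendix B), the low-pass operator $H^k$ and high-pass operator $G^k$ project onto orthogonal subspaces, so that the cross-covariance $\mathrm{Cov}(\vx_L^k,\vx_H^k)$ vanishes for the stationary image model. Second, by the Gaussian tendency of the low-frequency coefficients, $\vx_L^k$ is, for large $k$, arbitrarily close to Gaussian. Modelling the joint coefficient vector as a Gaussian scale mixture (as used for the high-frequency band), the only residual coupling between the bands is carried by the shared local-energy multiplier, whose relative fluctuation shrinks as the averaging window $n_k$ grows. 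Combining decorrelation with the vanishing multiplier fluctuation is intended to yield a total-variation bound on the conditional versus marginal law of $\vx_H^k$,
\begin{equation}
\big\| p(\,\cdot\,|\vx_L^k) - p(\,\cdot\,) \big\|_{\mathrm{TV}} \leq \delta_k, \qquad \delta_k \xrightarrow{k\to\infty} 0 ,
\end{equation}
where $\delta_k$ depends only on the scale $k$.

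To finish, I would use that the sparsity functional is bounded, $0 \leq s(\vx_H^k) \leq 1$, so that the difference of its expectations under two laws is controlled by their total-variation distance:
\begin{equation}
\Big| \mathbb{E}\big(s(\vx_H^k)\,|\,\vx_L^k\big) - \mathbb{E}\big(s(\vx_H^k)\big) \Big| \leq 2\,\| s \|_\infty \, \big\| p(\,\cdot\,|\vx_L^k) - p(\,\cdot\,) \big\|_{\mathrm{TV}} \leq 2\delta_k .
\end{equation}
Setting $\varepsilon := 2\delta_k > 0$ and combining with the first display gives
\begin{equation}
\mathbb{E}\big(s(\vx_H^k)\,|\,\vx_L^k\big) \geq \mathbb{E}\big(s(\vx_H^k)\big) - \varepsilon \geq \alpha - \varepsilon ,
\end{equation}
which is the claim, with $\varepsilon$ determined by $k$ through $\delta_k$.

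The hard part will be making the approximate-independence bound rigorous. Orthogonality of the wavelet and scaling functions only delivers decorrelation, and for the genuinely non-Gaussian, sparse high-frequency band decorrelation does not imply independence. Bridging this gap requires the Gaussian-scale-mixture structure together with a quantitative statement that the posterior of the hidden multiplier given $\vx_L^k$ concentrates as $k$ grows; extracting an explicit, monotone-in-$k$ rate $\delta_k$ (rather than a mere limit) is the delicate step, and it is precisely where the whitening result for the low-frequency band must be invoked quantitatively rather than qualitatively.
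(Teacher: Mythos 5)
Your skeleton (law of total expectation, boundedness of $s$, perturbation of expectations by total variation) is logically coherent, but it rests entirely on the display $\| p(\,\cdot\,|\vx_L^k) - p(\,\cdot\,) \|_{\mathrm{TV}} \leq \delta_k$ holding \emph{uniformly} in $\vx_L^k$ with $\delta_k \to 0$, and that bound is never established. This is not a technical refinement to be filled in later: a uniform conditional-law bound of this kind immediately implies the proposition (and much more, since it works for every bounded functional, not just sparsity), so your proof defers the whole content of the claim to the step you admit you cannot yet do. The ingredients you cite do not deliver it. Orthogonality of $\varphi_{k,l}$ and $\psi_{k,l}$ gives only decorrelation, which, as you concede, does not imply approximate independence for sparse, heavy-tailed coefficients. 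Worse, the Gaussian-scale-mixture structure you invoke points in the opposite direction: in GSM models of natural images (the very models the paper cites), the cross-band coupling through the shared local-contrast multiplier is scale-persistent --- capturing that dependence is precisely why GSM models exist --- and nothing forces its ``relative fluctuation'' to shrink as $n_k$ grows. So no monotone-in-$k$ rate $\delta_k$ is available from these considerations, and the final total-variation inequality would fail to materialize.

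The paper's own proof avoids independence altogether, which is why it needs a weaker input. It uses the whitening result of Appendix E.1 quantitatively, but only on the \emph{conditioning} variable: $\int |p(\vx_L^k) - f_k(\vx_L^k)|\, d\vx_L^k \leq \varepsilon$ with $f_k$ the standard Gaussian density. It then writes $\mathbb{E}(s(\vx_H^k)) = \int \mathbb{E}(s(\vx_H^k)|\vx_L^k)\, p(\vx_L^k)\, d\vx_L^k \geq \alpha$, introduces the uniform lower bound $\alpha' = \inf_{\vx_L^k} \mathbb{E}(s(\vx_H^k)|\vx_L^k)$ (which exists since $0 \leq s \leq 1$), splits the integral into a part against $f_k$ plus a remainder controlled by $\varepsilon$, and concludes $\mathbb{E}(s(\vx_H^k)|\vx_L^k) \geq \alpha' \geq \alpha - \varepsilon$. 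In other words, the paper pushes all approximation error into the marginal law of $\vx_L^k$ (toward a Gaussian), where an $L^1$ bound is actually supplied, rather than into the conditional law of $\vx_H^k$ (toward its marginal), where no bound is available. If you want to salvage your route, the repair is to adopt that reduction: condition first, keep the conditional expectation as an unknown but bounded function of $\vx_L^k$, and let the whitening estimate absorb the discrepancy --- no independence statement is then required.
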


\begin{proof}
    According to \ref{whiten}, for a sufficiently large $k$ we could assume that
    \begin{equation}
        \int \vert p(\vx_L^k) - f_k(\vx_L^k) \vert d\vx_L^k \leq \varepsilon,
    \end{equation}
    where $f_k(\vx_L^k)$ is the PDF of standard Gaussian distribution.
    Notice that
    \begin{align}
        \mathbb{E}(s(\vx_H^k)) &= \iint s(\vx_H^k) p(s(\vx_H^k) | \vx_L^k) p(\vx_L^k) d\vx_L^k ds\\
        &= \int \mathbb{E}(s(\vx_H^k)|\vx_L^k) p(\vx_L^k) d\vx_L^k \geq \alpha
    \end{align}
    Since $s$ is a bounded function in $[0,1]$, $\mathbb{E}(s(\vx_H^k)|\vx_L^k)$ has an uniform lower bound with respect to all $\vx_L^k$, denoted as $\alpha'$. In other words, there exists $\alpha' \in [0,1]$ such that
    \begin{equation}
        \mathbb{E}(s(\vx_H^k)|\vx_L^k) \geq \alpha',\ \forall \vx_L^k
    \end{equation}
    We can get 

    \begin{align}
    \begin{aligned}
        &\int \mathbb{E}(s(\vx_H^k)|\vx_L^k) p(\vx_L^k) d\vx_L^k \\
        = &\int \mathbb{E}(s(\vx_H^k)|\vx_L^k) f_k(\vx_L^k) d\vx_L^k \\
        &+ \int \mathbb{E}(s(\vx_H^k)|\vx_L^k) (p(\vx_L^k) - f_k(\vx_L^k)) d\vx_L^k \\
        \geq &\ \alpha'
    \end{aligned}
    \end{align}

    Similarly, it is easy to see that $1$ is a trivial uniform upper bound for $\mathbb{E}(s(\vx_H^k)|\vx_L^k)$. Thus,
    \begin{equation}
        \mathbb{E}(s(\vx_H^k)|\vx_L^k) \geq \alpha' = \int \mathbb{E}(s(\vx_H^k)|\vx_L^k) p(\vx_L^k) d\vx_L^k - \varepsilon \geq \alpha - \varepsilon.
    \end{equation}
\end{proof}

Consequently, we can see the conditional distribution of $\vx_H^k$ on $\vx_L^k$ exhibits highly non-Gaussian properties and yields sparse samples.

Figure \ref{fig: sparsity} summarizes the sparsity of high-frequency bands of CelebA-HQ images at different scales. High-frequency bands at all scales exhibit strong sparsity, and the bands at deeper scales have fewer non-zero components. Some of their non-zero components are low-magnitude noise that can be easily filtered out with a small threshold, corresponding to the efficiency of traditional wavelet denoising. These observations validate the highly non-Gaussian distribution of high-frequency wavelet coefficients.

\section{WMGM Training and Sampling Algorithms Description}
\label{algorithm_}
The training and sampling algorithms for WMGM.

\begin{algorithm}
    \caption{Training of WMGM}
    \label{training}
    \begin{algorithmic}[1]
    \REPEAT
        \STATE Sample $\vx_0 \sim q(\vx_0)$
        \STATE Multi-scale wavelet decompose $\vx_0$ to $\vx_L^k, \vx_H^k, k=1,\cdots,S$
        \STATE Sample $t \sim \mathrm{Uniform}([1,\cdots,T])$
        \STATE Sample $\vx_{L,t}^S$ from Eq. \ref{eq:DWT-OU}
        \STATE Take gradient descent step on\\
        $\nabla_\theta \lVert \vs_\theta(\vx_{L,t}^S,t)-\nabla_{\vx_{L,t}^S}\log q_{0t}(\vx_{L,t}^S|\vx_L^S) \rVert_2^2$
        \FOR{k=1 to S}
        \STATE $z^k \sim \mathscr{N} (\bm{0}, \mI)$
        \STATE Take gradient descent step on $\nabla_G \mathscr{L}_G$ and $\nabla_D \mathscr{L}_D$
        \ENDFOR
    \UNTIL converged
    \end{algorithmic}
\end{algorithm}

\begin{algorithm}
    \caption{Sampling of WMGM}
    \label{sampling}
    \begin{algorithmic}[1]
     \STATE Sample $\vx_{L,t}^S \sim \mathscr{N} (\bm{0}, \mI)$\\
    \FOR{$t = T, T-1, \cdots, 1$}
        \STATE $\vz \sim \mathscr{N} (\bm{0}, \mI)$ if $t > 1$, else $\vz=\bm{0}$
        \STATE Update $\vx_{L,t-1}^S$ based on Eq. \ref{eq:DWT-OU}\\
    \ENDFOR\\
    \STATE $\vx_L^S = \vx_{0,L}^S$\\
    \FOR{$k = S, S-1, \cdots, 1$}
        \STATE Sample $z^k \sim \mathscr{N} (\bm{0}, \mI)$
        \STATE $\vx_H^k = G(\vx_L^k, z^k)$
        \STATE $\vx_L^{k-1} = A^T (\vx_H^k, \vx_L^k)^T$
    \ENDFOR\\
    \STATE \textbf{return} $\vx_0=\vx_L^0$
    \end{algorithmic}
\end{algorithm}







\section{Datasets Description}
\label{datasets}
\textbf{CelebA-HQ}~\cite{liu2015faceattributes} dataset is an extension of the original CelebA~\cite{liu2015faceattributes} dataset. It contains high-quality images of celebrity faces at a higher resolution compared to the original CelebA dataset. This dataset was created to cater to the needs of tasks that require high-resolution facial images. The images in the CelebA-HQ dataset are typically at a resolution of $1024\times1024$ pixels, providing a significant improvement in image quality compared to the original CelebA dataset, which had lower-resolution images. Similar to the original CelebA dataset, CelebA-HQ comes with a set of facial attribute annotations. These annotations include information about attributes such as gender, age, and presence of accessories like glasses. CelebA-HQ also contains a substantial number of images. While the exact number may vary depending on the specific release, it generally consists of thousands of high-resolution images of celebrity faces. The dataset includes images of a diverse set of celebrities, covering a wide range of genders, ethnicities, and ages.

\noindent\textbf{Animal Faces-HQ (AFHQ)}  dataset, initially introduced in~\cite{choi2020starganv2}, comprises 15,000 high-resolution images with $512 \times 512$ pixels. This dataset has three distinct domains: cat, dog, and wildlife, each containing 5,000 images. With three domains and a diverse array of breeds ($\geq$ eight) per domain, AFHQ presents a more intricate image-to-image translation challenge. All images are meticulously aligned both vertically and horizontally to position the eyes at the center. The dataset underwent a careful curation process, with low-quality images being manually excluded. In this work, we exclusively use the cat and dog categories in all experiments.

\noindent\textbf{Colon} dataset contains 5,000 histochemical-stained images of $540 \times 540$ pixels. Deidentified and unlabelled colon slides were obtained from TissueArray.Com LLC, and then sent for immunohistochemistry (IHC) staining at an anonymous scientific research institution. The histochemical-stained images were captured with a benchtop bright-field microscope (Aperio AT, Leica Biosystems, 20×/0.75NA objective with a 2× adapter). Patch images in this dataset were randomly cropped from the whole slide images (WSIs) without overlapping, and the testing dataset is strictly excluded from the training dataset. The effective pixel size of the microscopy image is 0.25$\mu m$. Some data samples of Colon are provided in our code repository.

\section{Multi-scale (MS) and Single-scale (SS) Learning}
\subsection{Definition}
\label{definition:MSSS}

Here, we define two different model scales: \textbf{Multi-scale} and \textbf{Single-scale}. Multi-scale refers to the process where a single model is used during training to learn the mapping from low-frequency to high-frequency information across different scales. As depicted in Figure \ref{fig: framework}, the GAN used in Scale2 and Scale1 is the same and shares parameters. Single-scale, on the other hand, means that during training, a separate model is trained for each scale to perform the mapping, with no parameter sharing between the models.

\begin{minipage}{0.45\textwidth}
\label{SS}
\begin{flalign}
\textbf{Single Scale (SS):} \nonumber && \\
X_H^{S} &= T_{\delta s} (X_L^S) \nonumber && \\
X_H^{S-1} &= T_{\delta s-1} (X_L^{S-1})  && \\
X_H^{S-2} &= T_{\delta s-2} (X_L^{S-2}) \nonumber && \\
& \vdots \nonumber && \\
X_H^1 &= T_{\delta 1} (X_L^1) \nonumber &&
\end{flalign}
\end{minipage}
\begin{minipage}{0.45\textwidth}
\label{MS}
\begin{flalign}
\textbf{Multi Scale (MS):} \nonumber && \\
X_H^{S} &= T_\delta (X_L^S)  \nonumber&& \\
X_H^{S-1} &= T_\delta (X_L^{S-1}) \nonumber && \\
X_H^{S-2} &= T_\delta (X_L^{S-2}) && \\
& \vdots \nonumber && \\
X_H^1 &= T_\delta (X_L^1) \nonumber && 
\end{flalign}
\end{minipage}



As shown in Equation above, where \(X_L^S\) represents the \textbf{Low-frequency} information of the image at scale \(S\), and \(T\) represents the mapping relationship from low-frequency to high-frequency, which is learned by the neural network model. Similarly, \(X_H^S\) represents the \textbf{High-frequency} information of the image at scale \(S\). The subscript \(\delta\) on \(T\) indicates the model parameters. In the single scale model, multiple models of the same type (i.e. FNO, Unet, GANO) are used for different scales, hence the parameters are not the same. In the multi-scale model, a single model is used to learn the mapping relationship across different scales, thus sharing a common parameter. This is reflected in the equations as the use of the same \(\delta\).

When the high-frequency and low-frequency information at a certain scale is obtained from the model, we use the Inverse Wavelet Transform (IWT) to obtain the low-frequency information of the upper scale. As described by the following equation:
\begin{equation}
X_L^S = IWT(X_L^{S-1}, X_H^{S-1})
\end{equation}
\subsection{Learning Approaches} \label{ablation}

\textbf{Standard learning.} Here, we utilized the Unet as the representative model for standard learning in our experiments, comprising 5 encoder layers and 4 decoder layers. Each encoder layer is composed by 2 Convolution Layers with \(kernel\_size = 2\), \(kernel\_size = 4\) and \(padding = 1\), followed by batch normalization and ReLU activation. The output of each encoder layer is progressively downsampled through a max pooling layer. The decoder part includes transposed convolution operations for upsampling, where the features are concatenated with the corresponding encoder outputs after each upsampling, followed by processing through a Convolution module.

\noindent\textbf{Operator learning.} Here, we utilized the Fourier Neural Operator (FNO) as the representative model for operator learning in our experiments, adopting the default parameter settings: \(n\_modes = (16,16)\) and \(hidden\_channels = 256\). The total number of parameters for a single model is 75,896,329.

\noindent\textbf{Adversarial learning.} Here, we utilized the Generative Adversarial Network (GAN) as the representative model for Adversarial Learning in our experiments. Detailed model information can be found in \ref{attgan} as well as the source code.

\section{Evaluation Metrics} \label{eva}

\subsection{Kullback-Leibler Divergence} We utilized KL divergence to quantify the similarity between the distribution of LL coefficients and the standard Gaussian distribution $\mathcal{N}_0 = \mathcal{N}(\textbf{0}, \mI)$. Suppose the image dataset of interest has $N$ images $\mX_i \in \mathbb{R}^{L^2},\ i=1,2,\ldots,N$. We could calculate the sample mean and covariance matrix as below:
\begin{gather}
    \boldsymbol{\hat{\mu}} = \frac{\sum_{i=1}^N \mX_i}{N}\\
    \hat{\Sigma} = \frac{\sum_{i=1}^N \mX_i \mX_i^T}{N} - \boldsymbol{\hat{\mu}}\boldsymbol{\hat{\mu}}^T
\end{gather}
Due to the non-negative data range of image data, we perform pixel-wise normalization to the sample mean and covariance. Denote $\Lambda=($diag$(\hat{\Sigma}))^{\frac{1}{2}}$, the normalized sample mean and covariance are
\begin{gather}
    \boldsymbol{\tilde{\mu}} = \boldsymbol{\hat{\mu}} - \textbf{1}^T \boldsymbol{\hat{\mu}}\\
    \tilde{\Sigma} = \Lambda^{-1} \hat{\Sigma} \Lambda^{-1}
\end{gather}
By CLT, we know the normalized sample distribution can be approximated by a $L^2$-dimension Gaussian distribution $\mathcal{N}_1 = \mathcal{N}(\boldsymbol{\tilde{\mu}}, \tilde{\Sigma})$. Therefore, the KL divergence between the normalized sample distribution and standard Gaussian is:
\begin{equation}
    D_{KL}(\mathcal{N}_0 \parallel \mathcal{N}_1) = \frac{1}{2} \left(\mathrm{tr}(\tilde{\Sigma}^{-1}) - L^2 + \boldsymbol{\tilde{\mu}}^T \tilde{\Sigma}^{-1} \boldsymbol{\tilde{\mu}} + \ln (\mathrm{det}\tilde{\Sigma}) \right)
\end{equation}
\subsection{Fréchet Inception Distance (FID)}
For evaluation metrics, we use the Fréchet inception distance (FID) to measure the image quality. With the means and covariance matrices, we then compute the Fréchet distance (also known as the Wasserstein-2 distance) between the two multivariate Gaussians. The formula for FID is given by:
\begin{equation}
   FID = \|\mu_{real} - \mu_{gen}\|^2 + Tr(\Sigma_{real} + \Sigma_{gen} - 2(\Sigma_{real}\Sigma_{gen})^{1/2})
\end{equation}
   where \( \mu_{real} \) and \( \mu_{gen} \) are the feature means of the real and generated images, respectively, and \( \Sigma_{real} \) and \( \Sigma_{gen} \) are their corresponding covariance matrices. The term \( Tr \) denotes the trace of a matrix, which is the sum of its diagonal elements.
   
\section{Limitations}
\begin{itemize}
    \item The main goal of our model is fast sampling. Although it greatly reduces the sampling time, it also sacrifices a little image quality.
    \item Adversarial learning is not very stable during training and requires multiple training attempts to obtain higher accuracy.
\end{itemize}
\section{ Additional Results}
Here we present the sampling results on the CelebA-HQ (5k) and AFHQ-Cat datasets.

\begin{figure}[h]
	\centering
    \includegraphics[width=.9\linewidth]{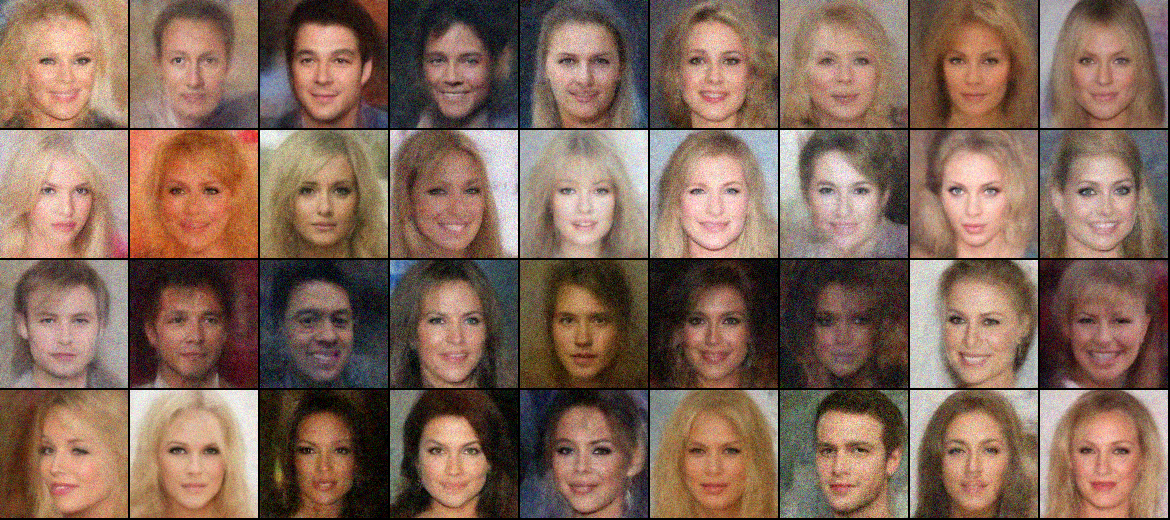}
    \caption{SGM with 4 discretization steps}
    \label{4_sgm}
\end{figure}

\begin{figure}[h]
	\centering
    \includegraphics[width=.9\linewidth]{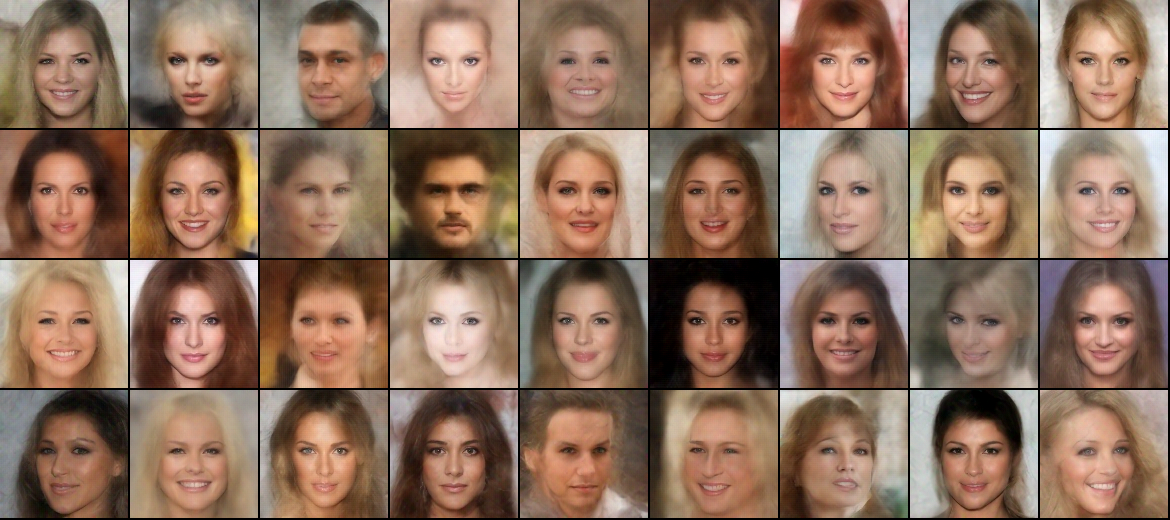}
    \caption{WSGM with 4 discretization steps}
    \label{4_wsgm}
\end{figure}

\begin{figure}[h]
	\centering
    \includegraphics[width=.9\linewidth]{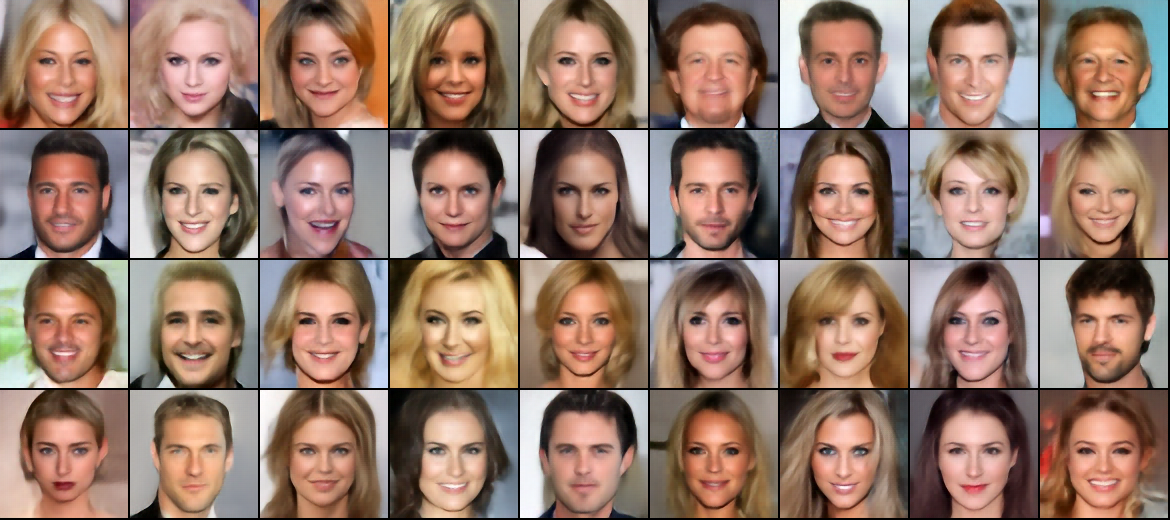}
    \caption{WMGM with 4 discretization steps}
    \label{4_wmgm}
\end{figure}

\begin{figure}[h]
	\centering
    \includegraphics[width=.9\linewidth]{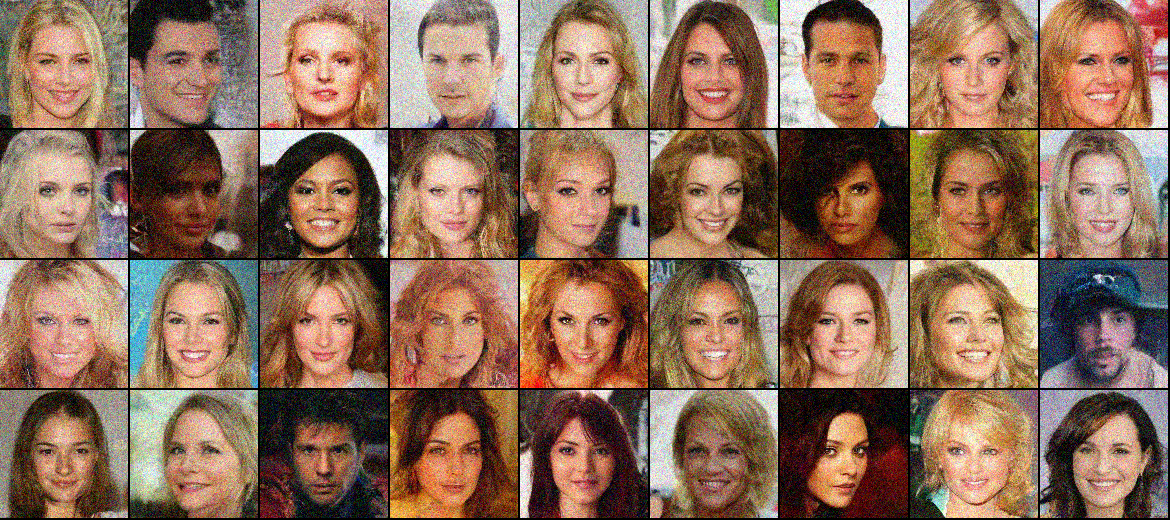}
    \caption{SGM with 16 discretization steps}
    \label{16_sgm}
\end{figure}

\begin{figure}[h]
	\centering
    \includegraphics[width=.9\linewidth]{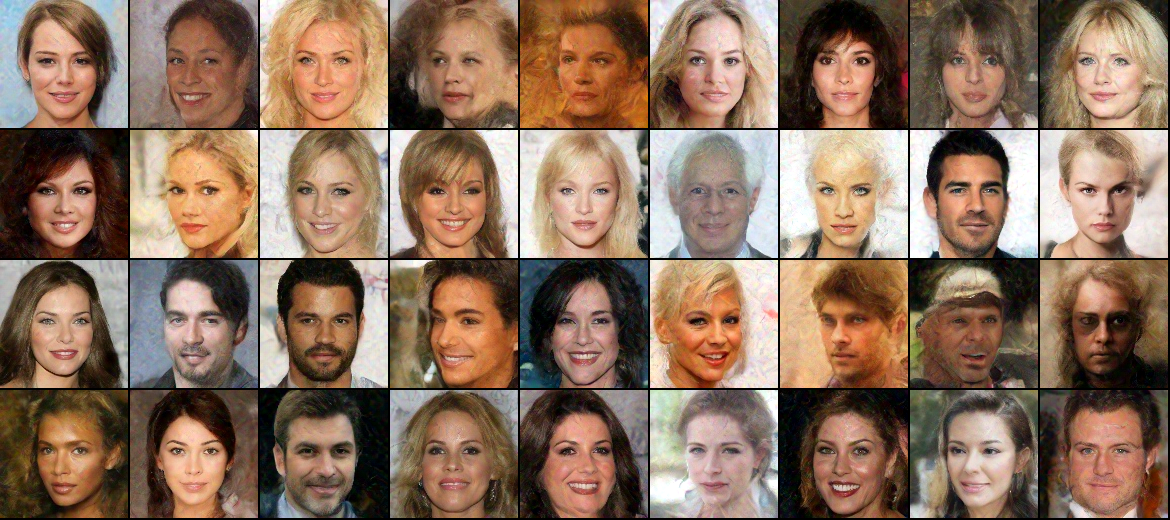}
    \caption{WSGM with 16 discretization steps}
    \label{16_wsgm}
\end{figure}

\begin{figure}[h]
	\centering
    \includegraphics[width=.9\linewidth]{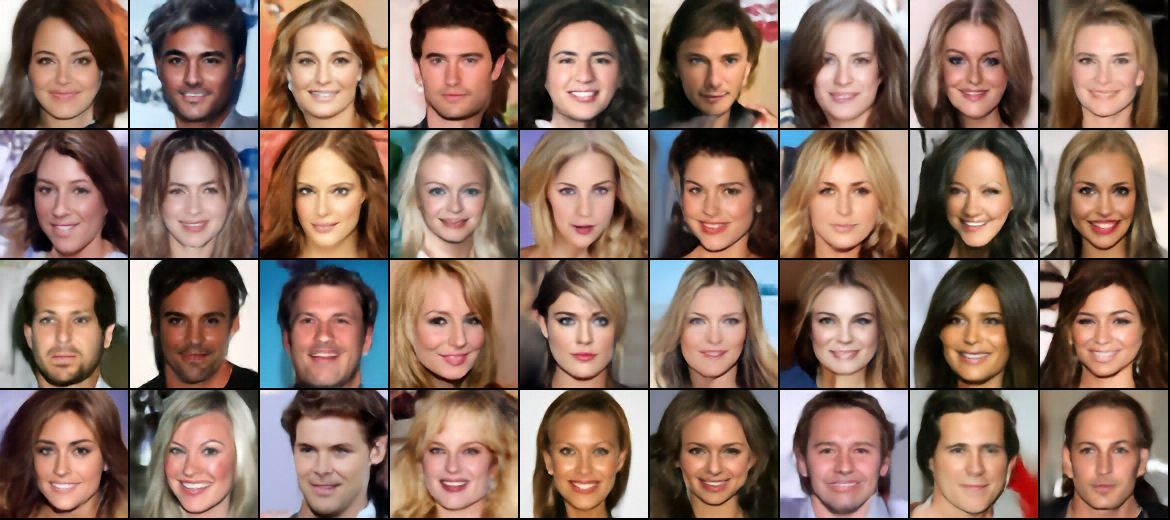}
    \caption{WMGM with 16 discretization steps}
    \label{16_wmgm}
\end{figure}

\begin{figure}[ht]
	\centering
    \includegraphics[width=.9\linewidth]{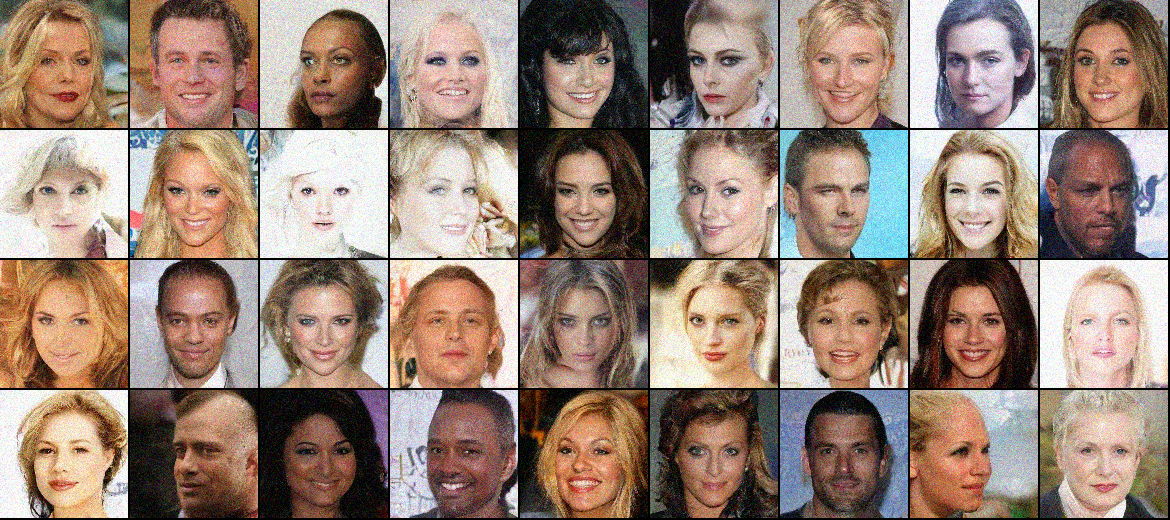}
    \caption{SGM with 64 discretization steps}
    \label{64_sgm}
\end{figure}

\begin{figure}[ht]
	\centering
    \includegraphics[width=.9\linewidth]{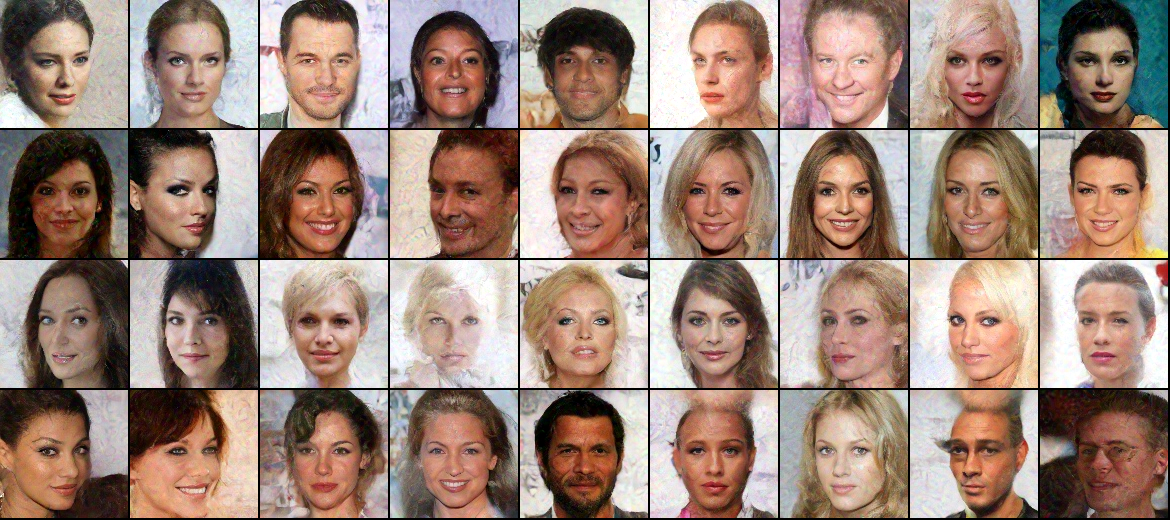}
    \caption{WSGM with 64 discretization steps}
    \label{64_wsgm}
\end{figure}

\begin{figure}[ht]
	\centering
    \includegraphics[width=.9\linewidth]{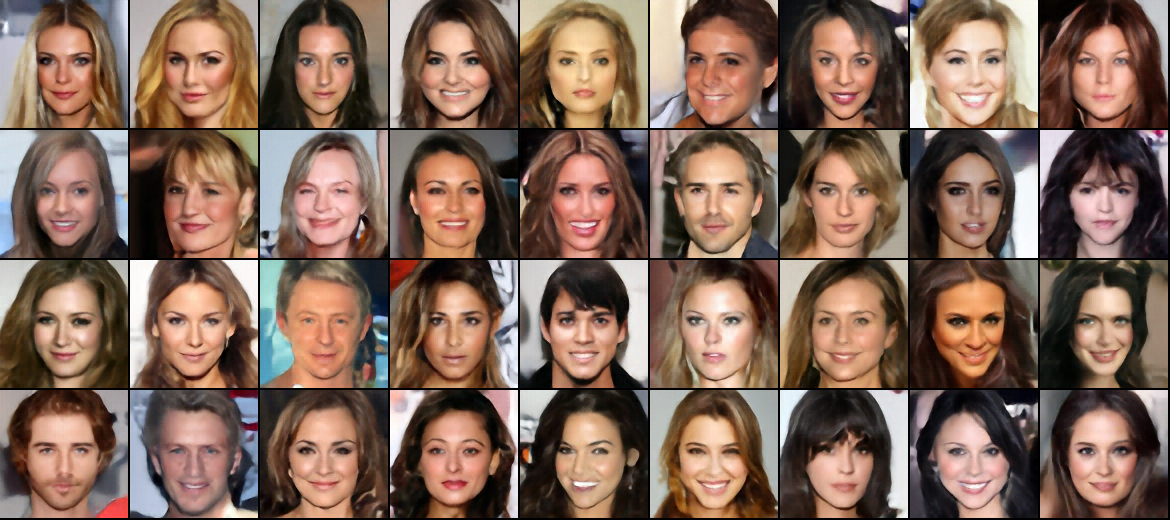}
    \caption{WMGM with 64 discretization steps}
    \label{64_wmgm}
\end{figure}

\begin{figure}[ht]
	\centering
    \includegraphics[width=.9\linewidth]{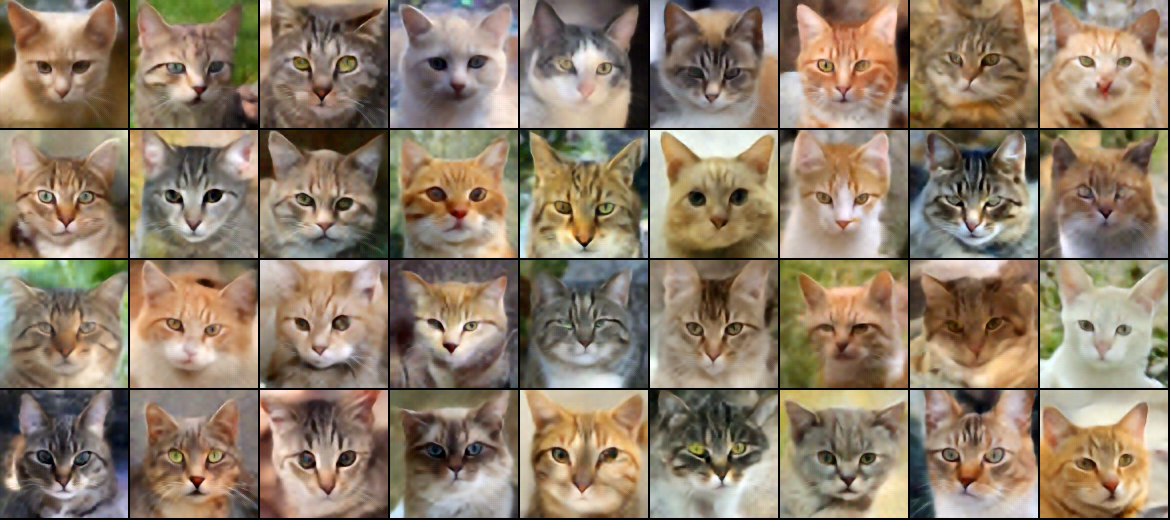}
    \caption{Cat samples with WMGM using 4 discretization steps}
    \label{cat4_sgm}
\end{figure}

\begin{figure}[ht]
	\centering
    \includegraphics[width=.9\linewidth]{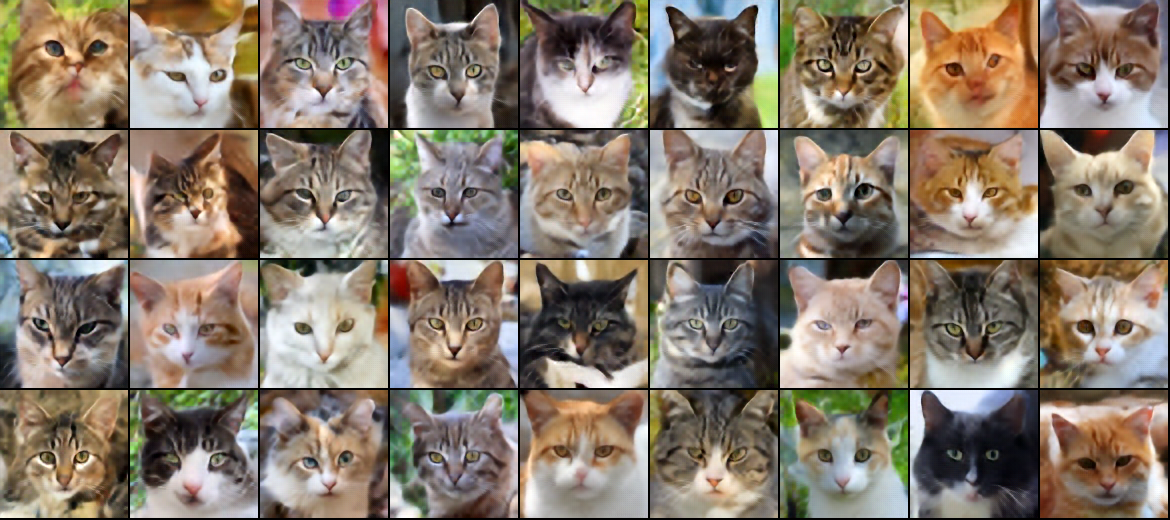}
    \caption{Cat samples with WMGM using 16 discretization steps}
    \label{cat16_wsgm}
\end{figure}

\begin{figure}[ht]
	\centering
    \includegraphics[width=.9\linewidth]{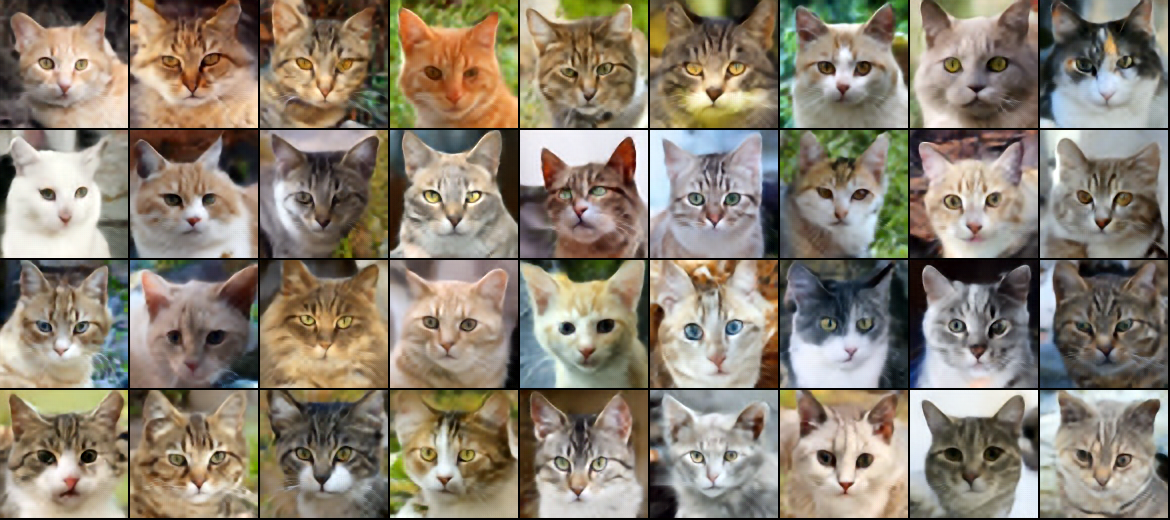}
    \caption{Cat samples with WMGM using 64 discretization steps}
    \label{cat64_wmgm}
\end{figure}

\begin{figure}[ht]
    \centering

    \begin{minipage}{.32\textwidth}
        \centering
        \includegraphics[width=\linewidth]{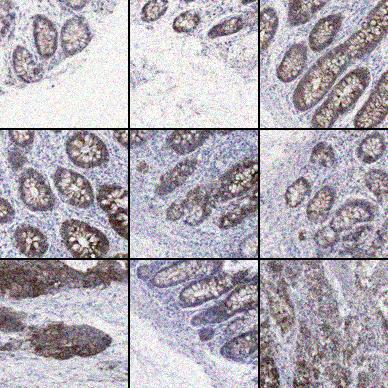}

    \end{minipage}\hfill
    \begin{minipage}{.32\textwidth}
        \centering
        \includegraphics[width=\linewidth]{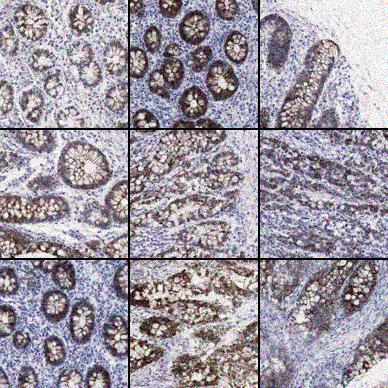}

    \end{minipage}\hfill
    \begin{minipage}{.32\textwidth}
        \centering
        \includegraphics[width=\linewidth]{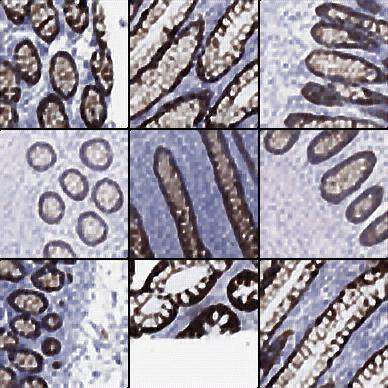}

    \end{minipage}
    \caption{Colon samples with SGM (Left), WSGM (Middle) and WMGM (Right) using 16 discretization steps}
\end{figure}

\end{document}